\documentclass{article} 
\usepackage{etex}
\usepackage{iclr2023_conference,times}

\usepackage{hyperref}
\usepackage{url}
\usepackage{amsmath, amssymb, amsthm, thm-restate, mathtools, enumerate}
\usepackage{cleveref}
\usepackage{thm-restate}
\usepackage[color=cyan]{todonotes}

\graphicspath{pics}

\newcommand{\R}{\mathbb{R}}

\newcommand{\N}{\mathbb{N}}

\newcommand{\Q}{\mathbb{Q}}
\newcommand{\Z}{\mathbb{Z}}

\DeclareMathOperator{\Vol}{Vol}
\DeclareMathOperator{\conv}{conv}
\DeclareMathOperator{\Lin}{Lin}
\DeclareMathOperator{\Aff}{Aff}
\DeclareMathOperator{\Span}{Span}
\DeclareMathOperator{\join}{\star}
\DeclareMathOperator{\ReLU}{ReLU}

\newcommand{\abs}[1]{\left\lvert #1 \right\rvert}
\newcommand{\vektor}[2]{\binom{#1}{#2}}

\newcommand\SetOf[2]{\left\{#1 \mid #2\right\}}
\newcommand\smallSetOf[2]{\{#1 \mid #2\}}

\newcommand\nnpoly{\mathcal{P}}
\newcommand\evpoly{\mathcal{Q}}
\newcommand\tilt{\alpha}
\newcommand\trans[1]{{#1}^{\top}}
\newcommand\0{\mathbf{0}}
\newcommand\proj{\text{proj}}
\newcommand\argmin{\arg\min}

\newtheorem{theorem}{Theorem}
\newtheorem{proposition}[theorem]{Proposition}

\newtheorem{lemma}[theorem]{Lemma}
\newtheorem{conjecture}[theorem]{Conjecture}
\theoremstyle{remark}

\title{Lower Bounds on the Depth of Integral ReLU Neural Networks via Lattice Polytopes}


\author{Christian Haase \\
Freie Universit{\"a}t Berlin, Germany\\
\texttt{haase@math.fu-berlin.de} \\
\AND
Christoph Hertrich \\
London School of Economics and Political Science, UK \\
\texttt{c.hertrich@lse.ac.uk} \\
\AND
Georg Loho \\
University of Twente, Netherlands \\
\texttt{g.loho@utwente.nl}
}

%

\iclrfinalcopy 
\begin{document}

\maketitle

\begin{abstract}
We prove that the set of functions representable by ReLU neural networks with integer weights strictly increases with the network depth while allowing arbitrary width. More precisely, we show that $\lceil\log_2(n)\rceil$ hidden layers are indeed necessary to compute the maximum of $n$ numbers, matching known upper bounds. Our results are based on the known duality between neural networks and Newton polytopes via tropical geometry. The integrality assumption implies that these Newton polytopes are lattice polytopes. Then, our depth lower bounds follow from a parity argument on the normalized volume of faces of such polytopes.
\end{abstract}

\section{Introduction}

Classical results in the area of understanding the expressivity of neural networks are so-called \emph{universal approximation theorems}~\citep{cybenko1989approximation,hornik1991approximation}. They state that shallow neural networks are already capable of approximately representing every continuous function on a bounded domain.
However, in order to gain a complete understanding of what is going on in modern neural networks, we would also like to answer the following question: what is the precise set of functions we can compute \emph{exactly} with neural networks of a certain depth?
For instance, insights about exact representability have recently boosted our understanding of the computational complexity to train neural networks in terms of both, algorithms~\citep{Arora:DNNwithReLU, khalife2022neural} and hardness results~\citep{GKMR21, froese2022computational, bertschinger2022training}.

Arguably, the most prominent activation function nowadays is the \emph{rectified linear unit} (ReLU)~\citep{glorot2011deep,Goodfellow-et-al-2016}.
While its popularity is primarily fueled by intuition and empirical success, replacing previously used smooth activation functions like sigmoids with ReLUs has some interesting implications from a mathematical perspective: suddenly methods from discrete geometry studying piecewise linear functions and polytopes play a crucial role in understanding neural networks~\citep{Arora:DNNwithReLU,Zhang:Tropical,hertrich2021towards} supplementing the traditionally dominant analytical point of view.

A fundamental result in this direction is by \citet{Arora:DNNwithReLU}, who show that a function is representable by a ReLU neural network if and only if it is \emph{continuous and piecewise linear} (CPWL). Moreover, their proof implies that $\lceil\log_2 (n+1) \rceil$ many hidden layers are sufficient to represent every CPWL function with $n$-dimensional input. A natural follow-up question is the following: is this logarithmic number of layers actually necessary or can shallower neural networks already represent all CPWL functions?

\citet{hertrich2021towards} conjecture that the former alternative is true. More precisely, if $\ReLU_n(k)$ denotes the set of CPWL functions defined on $\R^n$ and computable with $k$ hidden layers, the conjecture can be formulated as follows:

\begin{conjecture}[\citet{hertrich2021towards}]
	$\ReLU_n(k-1)\subsetneq\ReLU_n(k)$ for all $k\leq \lceil\log_2 (n+1) \rceil$.
\end{conjecture}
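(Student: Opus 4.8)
The plan is to reduce the full hierarchy to a single \emph{depth lower bound} per level and then attack that lower bound by descending to the integral case. For each $k\le\lceil\log_2(n+1)\rceil$ the natural witness is a maximum function: take $m$ with $\lceil\log_2 m\rceil=k$ and $m\le n+1$ (e.g.\ $m=2^k$ when $2^k\le n+1$, and $m=n+1$ at the top level), and consider $f_k(x)=\max(x_1,\dots,x_{m-1},0)$ as a function on $\R^n$. The classical construction of \citet{Arora:DNNwithReLU} computes the maximum of $m$ numbers with $\lceil\log_2 m\rceil$ hidden layers, so $f_k\in\ReLU_n(k)$. Hence the conjecture follows once we show $f_k\notin\ReLU_n(k-1)$ for real weights; the separations at all levels then hold simultaneously, since a depth-$(k-1)$ network representing $f_k$ could be reused at every lower level.

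For the lower bound I would exploit the duality between CPWL functions and Newton polytopes via tropical geometry that underlies the paper. Writing $f_k=g-h$ as a difference of convex CPWL functions, the convex part carries the standard simplex $\Delta_{m-1}$ as its Newton polytope, and depth $k-1$ constrains how such polytopes can be assembled from segments by Minkowski sums and convex hulls. When the weights are integers these polytopes are \emph{lattice} polytopes, and the integral lower bound proved in this paper---that $\lceil\log_2 m\rceil$ layers are necessary to compute the maximum of $m$ integer-weight inputs---rules out depth $k-1$ by a parity argument on normalized volumes of faces. My strategy for real weights is a \emph{descent}: show that exact representability of $f_k$ by a real-weight network of depth $k-1$ forces representability by a \emph{rational}-weight network of the same depth, and then clear denominators. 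Concretely, fixing the finitely many activation patterns of the neurons turns ``the network computes $f_k$ exactly'' into a finite union of semialgebraic conditions defined over $\Q$ (because the breakpoints and gradients of $f_k$ are integral); extracting a rational solution and using positive homogeneity $\ReLU(\lambda z)=\lambda\ReLU(z)$ for $\lambda>0$ together with per-neuron rescaling would yield an integer-weight network of depth $k-1$ computing a positive multiple and integer shift of $f_k$. Its Newton polytope is still a lattice simplex of the forbidden normalized-volume parity, so the integral lower bound gives a contradiction.

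The hard part will be the descent from $\R$ to $\Q$. Since a network composes its weights \emph{multiplicatively} across layers, for a fixed activation pattern the parameters realizing $f_k$ are cut out by genuine polynomial equations, not linear ones, so the realizability set is a true semialgebraic variety. The convenient fact that a nonempty rational polyhedron contains a rational point is therefore unavailable, and real-closed-field arguments produce only real-algebraic parameters, whose irrational gradient ratios cannot be removed by the positive-homogeneity rescaling---that operation clears common positive scalars but not irrational ratios inside a single weight vector. Overcoming this obstacle seems to require either a \emph{rational-realizability theorem}, asserting that the exact-representation variety has a rational point whenever it is nonempty over $\R$ (a delicate Diophantine statement about the semialgebraic geometry of ReLU networks), or a genuinely real-valued depth invariant refining the coarse counts of linear regions and facets without relying on integrality. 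I expect the rational-realizability step to be decisive: it is precisely this gap that separates the integral case, which we can settle here, from the full real-weight conjecture, which remains open.
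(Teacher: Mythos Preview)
The paper does \emph{not} prove this statement. It is stated as an open conjecture of \citet{hertrich2021towards}; what the paper actually proves is the integral analogue (Corollary~\ref{cor:separation}). So there is no ``paper's own proof'' to compare against, and your proposal is an attempt at a genuinely open problem.

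Your plan contains a concrete error beyond the descent gap you already flag. Suppose you succeed in passing from real to rational weights and then clear denominators. As you note, you obtain an integer-weight depth-$(k-1)$ network computing some $\lambda\cdot f_k$ with $\lambda\in\Z_{>0}$. The Newton polytope of $\lambda f_k$ is the dilate $\lambda\,\Delta^{m-1}_0$, whose normalized volume is $\lambda^{\,m-1}$. This is \emph{not} ``of the forbidden normalized-volume parity'' in general: if $\lambda$ is even, so is $\lambda^{\,m-1}$, and the parity obstruction of Proposition~\ref{prop:sep} evaporates. The paper makes exactly this point in the paragraph ``Beyond Integral Weights'': the argument rules out computing $\max\{0,x_1,\dots,x_{2^k}\}$ with integer weights but does \emph{not} rule out computing $2\cdot\max\{0,x_1,\dots,x_{2^k}\}$. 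Hence even a perfect $\R\to\Q$ descent would not let you invoke the integral lower bound.

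Your own diagnosis of the $\R\to\Q$ step is also correct and fatal to the current plan: for a fixed activation pattern the exact-representation locus is cut out by genuinely polynomial (not linear) equations in the weights, so nonemptiness over $\R$ gives no rational point for free, and positive-homogeneity rescaling only removes a single global scalar, not irrational ratios within a layer. Between this and the scaling issue above, the proposal does not yield a proof; the conjecture remains open, as the paper states.
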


Note that $\ReLU_n(\lceil\log_2 (n+1) \rceil)$ is the entire set of CPWL functions defined on $\R^n$ by the result of \citet{Arora:DNNwithReLU}.

While \citet{hertrich2021towards} provide some evidence for their conjecture, it remains open for every input dimension $n\geq 4$. Even more drastically, there is not a single CPWL function known for which one can prove that two hidden layers are not sufficient to represent it. Even for a function as simple as $\max\{0,x_1,x_2,x_3,x_4\}$, it is unknown whether two hidden layers are sufficient.

In fact, $\max\{0,x_1,x_2,x_3,x_4\}$ is not just an arbitrary example. Based on a result by \citet{wang2005generalization}, \citet{hertrich2021towards} show that their conjecture is equivalent to the following statement.

\begin{conjecture}[\citet{hertrich2021towards}]
	For $n=2^k$, the function $\max\{0,x_1,\dots, x_n\}$ is not contained in $\ReLU_n(k)$.
\end{conjecture}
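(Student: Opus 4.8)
The plan is to work through the tropical/Newton-polytope duality described in the excerpt, but to confront directly the difficulty that real weights destroy the lattice structure on which the integer-weight argument rests. First I would recall that a convex CPWL function $g(x)=\max_i(\trans{a_i}x+b_i)$ is dual to the polytope $P_g=\conv\{(a_i,b_i)\}$, with pointwise sum of convex functions corresponding to Minkowski sum and pointwise maximum to convex hull of the union; every CPWL function is a difference of two such, hence dual to a virtual polytope $P_g\ominus P_h$. Under this dictionary the target $\max\{0,x_1,\dots,x_n\}$ is exactly the standard simplex $\Delta_n=\conv\{\0,e_1,\dots,e_n\}\subset\R^n$. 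A network with $k$ hidden layers can only generate virtual polytopes built from the input segments by a bounded number of sum/convex-hull operations, and the integer-weight theorem exploits that, when all generators $(a_i,b_i)$ are integral, these are lattice polytopes whose normalized volume $\Vol$ is an integer obeying a parity constraint, while $\Vol(\Delta_{2^k})$ is odd -- a contradiction.

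For the general statement the generators are real, so $P_g,P_h$ are non-lattice and $\Vol$ is a positive real with no parity. My main line of attack would be to discretize the weights without changing either the function or the depth. Observe that fixing the combinatorial type of the computation (the sign pattern of every preactivation on every linear region) turns ``this network computes $\max\{0,x_1,\dots,x_n\}$'' into a system of polynomial equations and strict inequalities in the weights with coefficients in $\Q$. Hence the set of depth-$k$ representations of the target is a semialgebraic set defined over $\Q$; if it is nonempty over $\R$ it contains a point whose coordinates are real algebraic numbers, so I may assume all weights lie in $\overline{\Q}\cap\R$. I would then try to pass from real-algebraic weights to a residue field by choosing a place of the number field they generate, running the parity argument on the normalized volume \emph{modulo a prime} in the residue field rather than over $\Z$, and arguing that the odd normalized volume of $\Delta_{2^k}$ survives this reduction.

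Two simplifications would streamline the argument. Since the target is itself convex, I would first ask whether attention can be restricted to the ``convex part'' of the network, so that only genuine polytopes (not virtual ones) appear and the Minkowski-sum/convex-hull calculus is cleaner. Second, I would try to normalize, via the positive homogeneity $\ReLU(\lambda z)=\lambda\ReLU(z)$ together with the scale invariance $\lambda\max\{0,x/\lambda\}=\max\{0,x\}$, the per-neuron scalings so as to clear as many denominators as possible, reducing the real-algebraic case to a controlled integral model.

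The hard part -- and the reason this statement, unlike its integer-weight specialization, remains open -- is precisely the passage from real to integral data while preserving depth. The normalized-volume parity is an intrinsically lattice-theoretic obstruction: real polytopes can be continuously deformed, so no continuous invariant can be both discrete-valued and nonconstant, which means any working obstruction must exploit algebraic relations forced on the weights by exact depth-$k$ representability rather than the geometry of a single polytope. The per-neuron homogeneity rescalings are too weak to clear denominators of a sum of fractions feeding a shared ReLU, so rational weights need not admit an integral representation at the same depth; and exact representation is not closed under limits, so one cannot simply approximate a real network by integral ones. Making the parity argument descend to a residue field -- and guaranteeing that the simplex remains an obstruction there -- is where essentially all the difficulty is concentrated.
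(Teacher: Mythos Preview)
The statement you are addressing is listed in the paper as a \emph{conjecture}, not a theorem; the paper does not claim to prove it and says explicitly that it remains open for every $n\ge 4$. What the paper does prove is the integer-weight specialization, and your proposal correctly isolates the gap between the two: the normalized-volume parity argument needs lattice polytopes, and arbitrary real weights do not produce them.

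Your plan does not close that gap. The Tarski--Seidenberg step is sound---weights may indeed be taken real algebraic---but the subsequent ``reduce modulo a place of the number field and run parity there'' does not make sense for the volume argument. Normalized volume is defined relative to a lattice in the affine hull; for a polytope with vertices in $K^n$ there is no canonical lattice giving integral volumes, and reducing vertices modulo a prime of $\mathcal{O}_K$ lands you in a vector space over a finite field where convex hulls, faces, and volumes are no longer available, so there is nothing for Lucas' theorem to bite on. More concretely, the paper itself names the decisive obstruction to any rescaling or denominator-clearing manoeuvre: its argument excludes integral depth-$k$ networks computing $\max\{0,x_1,\dots,x_{2^k}\}$, but does \emph{not} exclude integral networks computing $2\cdot\max\{0,x_1,\dots,x_{2^k}\}$, whose Newton polytope $2\Delta^{2^k}_0$ has even normalized volume $2^{2^k}$. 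Thus any passage through scaling, specialization, or reduction that can multiply the target simplex by an integer immediately loses the parity obstruction, and your per-neuron homogeneity rescalings and residue-field reductions are both of this type. What you have written is an honest research outline that ends by naming the difficulty rather than resolving it; that is the correct status, since the statement is open.
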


This reformulation gives rise to interesting interpretations in terms of two elements commonly used in practical neural network architectures: \emph{max-pooling} and \emph{maxout}. Max-pooling units are used between (ReLU or other) layers and simply output the maximum of several inputs (that is, ``pool'' them together). They do not contain trainable parameters themselves. In contrast, maxout networks are an alternative to (and in fact a generalization of) ReLU networks. Each neuron in a maxout network outputs the maximum of several (trainable) affine combinations of the outputs in the previous layer, in contrast to comparing a single affine combination with zero as in the ReLU case.

Thus, the conjecture would imply that one needs in fact logarithmically many ReLU layers to replace a max-pooling unit or a maxout layer, being a theoretical justification that these elements are indeed more powerful than pure ReLU networks.

\subsection{Our Results}

In this paper we prove that the conjecture by \citet{hertrich2021towards} is true for all $n\in\N$ under the additional assumption that all weights in the neural network are restricted to be integral.
In other words, if $\ReLU^\Z_n(k)$ is the set of functions defined on $\R^n$ representable with $k$ hidden layers and only integer weights, we show the following.

\begin{restatable}{theorem}{maxnotcontained}\label{thm:maxnotcontained}
	For $n=2^k$, the function $\max\{0,x_1,\dots,x_n\}$ is not contained in $\ReLU^\Z_n(k)$.
\end{restatable}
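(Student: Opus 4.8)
The plan is to exploit the duality between integral ReLU networks and lattice polytopes via tropical geometry, then extract a parity obstruction from the normalized volume of an appropriate face. A ReLU network with integer weights computing a CPWL function $f$ corresponds, after writing $f$ as a difference $g-h$ of two convex CPWL functions, to a pair of Newton polytopes $P$ (for $g$) and $Q$ (for $h$) that are \emph{lattice} polytopes, because integrality of the weights forces all vertices, and all intermediate Minkowski summands arising layer by layer, to sit on the integer lattice. A single hidden layer applied to already-constructed convex functions corresponds, on the polytope side, to taking Minkowski sums (for the affine combinations) and pointwise maxima (which, in the tropical dictionary, corresponds to taking the convex hull of a union, i.e.\ again a Minkowski-type operation on the ``upper'' part). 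Iterating $k$ hidden layers therefore produces polytopes that decompose as Minkowski sums of at most $k$ ``generations'' of summands, each generation of bounded complexity; the key point is that such polytopes have a constrained combinatorial and lattice-volume structure.

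The main quantitative step is to pin down an invariant that (i) is preserved or controlled under the operations a shallow network can perform, and (ii) is violated by the target function $\max\{0,x_1,\dots,x_n\}$ with $n=2^k$. The natural candidate is the normalized volume of the Newton polytope of the convex part, or of suitable faces of it. The Newton polytope associated to $\max\{0,x_1,\dots,x_n\}$ is the standard simplex $\Delta_n=\conv\{\0,e_1,\dots,e_n\}$, whose normalized volume is $1$ --- in particular \emph{odd}. I would show, by induction on the number of hidden layers, that any convex CPWL function computable with $k$ hidden layers and integer weights has a Newton polytope all of whose $2^k$-dimensional faces have \emph{even} normalized volume, hence cannot equal $\Delta_{2^k}$ on the nose. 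The doubling in the bound $n=2^k$ should come out of the induction: each additional layer can at most ``double'' the relevant dimension in which one can still force odd volume, because one fresh layer only lets you combine a bounded number of previous pieces via max, and a max of a bounded number of lattice-polytope summands contributes a factor of $2$ to the relevant volume parity in the worst direction. Making the bookkeeping precise --- tracking which faces of the Minkowski sum/convex-hull-of-union inherit the even-volume property from the summands, and showing the max operation genuinely injects the factor $2$ --- is the technical heart.

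In more detail, the steps I would carry out are: (1) recall the Arora et al.\ normal form, writing any $f\in\ReLU^\Z_n(k)$ as $g-h$ with $g,h$ convex and integral, and recall that $f$ being a max of affine functions means on the polytope side $g$ has Newton polytope containing $\Delta_n$ as a face (or equal to it after reduction) while $h$ is affine; (2) set up the dictionary: adding a hidden layer $=$ taking a finite max of functions each of which is a sum of a previously-constructed function and an affine function, which on Newton polytopes is convex hull of a union of translates of previously-constructed polytopes; (3) prove the induction invariant: for $g\in\ReLU^\Z_n(k)$ convex, every face of the Newton polytope $P_g$ of dimension $2^k$ has even normalized volume --- base case $k=0$ (affine functions, polytope a point, vacuous) and inductive step combining the Minkowski-sum behavior of normalized volume (which is multilinear, so a sum of two full-dimensional lattice polytopes already has even mixed contributions) with the extra factor from the max; (4) conclude: since $\max\{0,x_1,\dots,x_{2^k}\}$ has Newton polytope $\Delta_{2^k}$ with normalized volume $1$, which is odd, it cannot lie in $\ReLU^\Z_{2^k}(k)$, proving \Cref{thm:maxnotcontained}.

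The step I expect to be the main obstacle is (3), specifically showing that the max operation in one layer cannot avoid contributing the parity-$2$ factor: one must rule out cancellations and degenerate configurations where the convex hull of several translated even-volume polytopes happens to produce an odd-volume face of the critical dimension. Controlling this likely requires a careful choice of which face to track (perhaps iteratively restricting to a coordinate subspace that halves the dimension at each layer, matching the $2^k$), together with a lemma that a lattice polytope which is a ``join-like'' or pyramid-like combination of lower-dimensional even-volume pieces again has even volume unless it is a genuine simplex built from a single unimodular piece --- and the latter is exactly what $k$ layers cannot reach. I would also need to handle the reduction from general $f=g-h$ back to $g$ (the subtraction of the affine $h$ does not change Newton-polytope volumes), and to make sure the integrality hypothesis is used exactly where needed, namely to guarantee lattice polytopes so that ``normalized volume'' is an integer and parity is meaningful.
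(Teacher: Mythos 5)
Your high-level plan is the same as the paper's (tropical duality to lattice Newton polytopes, induction over layers showing that faces of dimension at least $2^k$ have even normalized volume, contradiction with the odd normalized volume of the simplex), but as written it has two genuine gaps. The first is your treatment of the difference $f=g-h$: you assume $h$ is affine, so that the parity obstruction can be applied directly to $\Delta^n_0$ itself. The correct duality (\Cref{thm:newton}) only gives $f=g-h$ with \emph{both} $P_g$ and $P_h$ arbitrary polytopes in $\mathcal{P}_k$; convexity of $f=\max\{0,x_1,\dots,x_n\}$ then yields $P_g=P_h+\Delta^n_0$, and $\Delta^n_0$ is merely a Minkowski summand of $P_g$, not in general a face, so "$\Delta^n_0$ has odd volume, contradiction" does not follow from the invariant applied to $P_g$. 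One must additionally prove that $P+\Delta^n_0$ violates the even-volume property for \emph{every} $P\in\mathcal{Q}_k$ in $\R^{2^k}$; this is the paper's \Cref{prop:sep}, established by subdividing $P+\Delta^n_0$ and showing that exactly one cell (a translate of $\Delta^n_0$ over a vertex of $P$) has odd volume while all other cells are even. Without this "odd one out" step your step (4) does not close the argument.

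The second gap is the induction step itself, which you flag as the main obstacle but for which your sketched mechanism is not the one that works. The max operation does not "inject a factor of $2$"; rather, the paper proves two subdivision results via generic liftings: $P+Q$ subdivides into affine products of faces $F$ of $P$ and $G$ of $Q$ with $\binom{i+j}{i}\Vol(F)\Vol(G)$ dividing the cell volume, and $\conv(P\cup Q)$ subdivides into joins with $\Vol(F)\Vol(G)$ dividing the cell volume. Parity then comes from two distinct sources: for Minkowski sums, Lucas' theorem makes $\binom{i+j}{i}$ even whenever $i,j<2^k$ but $i+j\ge 2^k$, so $\mathcal{Q}_k$ is closed under $+$ at the \emph{same} level $k$; for convex hulls of unions, a cell of dimension $i+j+1\ge 2^{k+1}$ forces $i\ge 2^k$ or $j\ge 2^k$, so one factor is already even --- this is where the dimension doubles per layer, not from the max contributing a $2$. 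Your claim that "a sum of two full-dimensional lattice polytopes already has even mixed contributions" is not correct without the binomial coefficient and the dimension restriction, and without the subdivision and divisibility lemmas the invariant you need in step (3) is not established. You would also need the closure under faces of $\mathcal{Q}_k$ (faces of a Minkowski sum are sums of faces; faces of a convex hull of a union are convex hulls of faces) to make the induction go through for all faces, not just the whole polytope.
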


Proving \Cref{thm:maxnotcontained} is our main contribution. The overall strategy is highlighted in \Cref{sec:outline}. We put all puzzle pieces together and provide a formal proof in \Cref{sec:final}.

The arguments in \citet{hertrich2021towards} can be adapted to show that the equivalence between the two conjectures is also valid in the integer case.
Thus, we obtain that adding more layers to an integral neural network indeed increases the set of representable functions up to a logarithmic number of layers. A formal proof can be found in \Cref{sec:final}.

\begin{restatable}{corollary}{separation}\label{cor:separation}
	$\ReLU^\Z_n(k-1)\subsetneq\ReLU^\Z_n(k)$ for all $k\leq \lceil\log_2 (n+1) \rceil$.
\end{restatable}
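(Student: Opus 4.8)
The plan is to derive \Cref{cor:separation} from \Cref{thm:maxnotcontained} by exhibiting, for every $k$ with $1\le k\le\lceil\log_2(n+1)\rceil$, a function that lies in $\ReLU^\Z_n(k)$ but not in $\ReLU^\Z_n(k-1)$; since $\ReLU^\Z_n(k-1)\subseteq\ReLU^\Z_n(k)$, this gives the strict containment at every level in the range. The function I would use is $f_k$, defined as $\max\{0,x_1,\dots,x_{2^{k-1}}\}$ regarded as a function on $\R^n$ that ignores the remaining coordinates; this is legitimate because $k\le\lceil\log_2(n+1)\rceil$ forces $2^{k-1}<n+1$, hence $2^{k-1}\le n$. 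The inclusion $\ReLU^\Z_n(k-1)\subseteq\ReLU^\Z_n(k)$ holds by prepending one hidden layer that computes the identity coordinatewise through the gadget $z=\ReLU(z)-\ReLU(-z)$, all of whose weights are $\pm1$. The membership $f_k\in\ReLU^\Z_n(k)$ follows from the balanced-binary-tree construction for the maximum due to \citet{Arora:DNNwithReLU}: combining $\max\{a,b\}=\ReLU(a-b)+b$ with the same identity gadget to carry partial maxima through a layer computes the maximum of $m$ numbers with $\lceil\log_2 m\rceil$ hidden layers and weights in $\{-1,0,1\}$, so for the $2^{k-1}+1$ inputs $0,x_1,\dots,x_{2^{k-1}}$ one obtains depth $\lceil\log_2(2^{k-1}+1)\rceil=k$.

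The substantive step is the non-membership $f_k\notin\ReLU^\Z_n(k-1)$, which I would reduce to \Cref{thm:maxnotcontained}. Suppose, for contradiction, that $f_k$ is computed by an integral ReLU network with $k-1$ hidden layers, and precompose this network with the integral affine embedding $\iota\colon\R^{2^{k-1}}\to\R^n$, $\iota(x_1,\dots,x_{2^{k-1}})=(x_1,\dots,x_{2^{k-1}},0,\dots,0)$. This only replaces the first affine map $x\mapsto W_1x+b_1$ of the network by $y\mapsto W_1\iota(y)+b_1$, whose weight matrix is still integral (a product of integer matrices), and it leaves the number of hidden layers unchanged. Hence $f_k\circ\iota=\max\{0,x_1,\dots,x_{2^{k-1}}\}$ would lie in $\ReLU^\Z_{2^{k-1}}(k-1)$, contradicting \Cref{thm:maxnotcontained} applied with $k$ replaced by $k-1$ (the case $k=1$ is immediate, since $\max\{0,x_1\}$ is not affine). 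Putting everything together, $f_k\in\ReLU^\Z_n(k)\setminus\ReLU^\Z_n(k-1)$ for each $k$ in the stated range, which is precisely \Cref{cor:separation}.

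I expect the only part requiring genuine care to be the bookkeeping in the upper-bound construction: checking that a non-power-of-two number of inputs and the propagation of skip-values are realised with integer weights in \emph{exactly} $k$ hidden layers and not more. The reduction step is routine, since precomposing a network with an integral linear map on its inputs preserves both integrality of the weights and the depth; all the mathematical content sits in \Cref{thm:maxnotcontained}, and \Cref{cor:separation} is essentially its dimension-lifted restatement. As the excerpt notes, one could alternatively route the argument through the integral analogue of the equivalence between the two conjectures of \citet{hertrich2021towards}, but the direct construction above is self-contained.
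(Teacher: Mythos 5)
Your proposal is correct and follows essentially the same route as the paper: separate with $\max\{0,x_1,\dots,x_{2^{k-1}}\}$, get membership in $\ReLU^\Z_n(k)$ from the standard integral binary-tree max construction, and get non-membership in $\ReLU^\Z_n(k-1)$ from \Cref{thm:maxnotcontained}. The only differences are cosmetic: you make explicit the dimension-lifting via an integral embedding and the inclusion $\ReLU^\Z_n(k-1)\subseteq\ReLU^\Z_n(k)$, which the paper leaves implicit, and you build the max of $2^{k-1}+1$ inputs directly rather than appending one ReLU neuron to a $(k-1)$-layer max of $2^{k-1}$ numbers.
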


To the best of our knowledge, our result is the first non-constant (namely logarithmic) lower bound on the depth of ReLU neural networks without any restriction on the width. Without the integrality assumption, the best known lower bound remains two hidden layers \cite{mukherjee2017lower}, which is already valid for the simple function $\max\{0,x_1,x_2\}$. 

While the integrality assumption is rather implausible for practical neural network applications where weights are usually tuned by gradient descent, from a perspective of analyzing the theoretical expressivity, the assumption is arguably plausible. To see this, suppose a ReLU network represents the function $\max\{0,x_1,\dots,x_n\}$. Then every fractional weight must either cancel out or add up to some integers with other fractional weights, because every linear piece in the final function has only integer coefficients. Hence, it makes sense to assume that no fractional weights exist in the first place. However, unfortunately, this intuition cannot easily be turned into a proof because it might happen that combinations of fractional weights yield integer coefficients which could not be achieved without fractional weights.

\subsection{Outline of the Argument}\label{sec:outline}

The first ingredient of our proof is to apply previous work about connections between neural networks and tropical geometry, initiated by \citet{Zhang:Tropical}.
Every CPWL function can be decomposed as a difference of two convex CPWL functions.
Convex CPWL functions admit a neat duality to certain polytopes, known as \emph{Newton polytopes} in tropical geometry.
Given any neural network, this duality makes it possible to construct a pair of Newton polytopes which uniquely determines the CPWL function represented by the neural network.
These Newton polytopes are constructed, layer by layer, from points by alternatingly taking Minkowski sums and convex hulls.
The number of alternations corresponds to the depth of the neural network.
Thus, analyzing the set of functions representable by neural networks with a certain depth is equivalent to understanding which polytopes can be constructed in this manner with a certain number of alternations (\Cref{thm:newton}).

Having translated the problem into the world of polytopes, the second ingredient is to gain a better understanding of the two operations involved in the construction of these polytopes: Minkowski sums and convex hulls.
We show for each of the two operations that the result can be subdivided into polytopes of easier structure:
For the Minkowski sum of two polytopes, each cell in the subdivision is an \emph{affine product} of faces of the original polytopes, that is, a polytope which is affinely equivalent to a Cartesian product (\Cref{prop:subdisum}).
For the convex hull of two polytopes, each cell is a \emph{join} of two faces of the original polytopes, that is, a convex hull of two faces whose affine hulls are \emph{skew} to each other, which means that they do not intersect and do not have parallel directions (\Cref{prop:subdiconv}).

Finally, with these subdivisions at hand, our third ingredient is the volume of these polytopes.
Thanks to our integrality assumption, all coordinates of all vertices of the relevant polytopes are integral, that is, these polytopes are \emph{lattice polytopes}.
For lattice polytopes, one can define the so-called \emph{normalized volume} (\Cref{sec:normalvol}).
This is a scaled version of the Euclidean volume with scaling factor depending on the affine hull of the polytope.
It is constructed in such a way that it is integral for all lattice polytopes.
Using the previously obtained subdivisions, we show that the normalized volume of a face of dimension at least $2^k$ of a polytope corresponding to a neural network with at most $k$ hidden layers has always even normalized volume (\Cref{Thm:even}).
This implies that not all lattice polytopes can be constructed this way.
Using again the tropical geometry inspired duality between polytopes and functions (\Cref{thm:newton}), we translate this result back to the world of CPWL functions computed by neural networks and obtain that $k$ hidden layers are not sufficient to compute $\max\{0,x_1,\dots,x_{2^k}\}$, proving \Cref{thm:maxnotcontained}.

\subsection{Beyond Integral Weights}

Given the integrality assumption, a natural thought is whether one can simply generalize our results to rational weights by multiplying all weights of the neural network with the common denominator. Unfortunately this does not work. Our proof excludes that an integral ReLU network with $k$ hidden layers can compute $\max\{0,x_1,\dots,x_{2^k}\}$, but it does not exclude the possibility to compute \mbox{$2\cdot\max\{0,x_1,\dots,x_{2^k}\}$} with integral weights. In particular, dividing the weights of the output layer by two might result in a half-integral neural network computing $\max\{0,x_1,\dots,x_{2^k}\}$.

In order to tackle the conjecture in full generality, that is, for arbitrary weights, arguing via the parity of volumes seems not to be sufficient. The parity argument is inherently discrete and suffers from the previously described issue.
Nevertheless, we are convinced that the techniques of this paper do in fact pave the way towards resolving the conjecture in full generality.
In particular, the subdivisions constructed in \Cref{sec:subdi} are valid for arbitrary polytopes and not only for lattice polytopes. Hence, it is conceivable that one can replace the parity of normalized volumes with a different invariant which, applied to the subdivisions, yields a general depth-separation for the non-integer case.

\subsection{Further Relations to Previous Work}

Similar to our integrality assumption, also \citet{hertrich2021towards} use an additional assumption and prove their conjecture in a special case for so-called \emph{$H$-conforming} neural networks.
Let us note that the two assumptions are incomparable: there are $H$-conforming networks with non-integral weights as well as integral neural networks which are not $H$-conforming.

Our results are related to (but conceptually different from) so-called \emph{trade-off} results between depth and width, showing that a slight decrease of depth can exponentially increase the required width to maintain the same (exact or approximate) expressive power. \citet{Telgarsky15,telgarsky2016benefits} proved the first results of this type and \citet{eldan2016power} even proved an exponential separation between two and three layers. Lots of other improvements and related results have been established \citep{Arora:DNNwithReLU, daniely2017depth, hanin2019universal, hanin2017approximating, liang2017deep,  nguyen2018neural, raghu2017expressive, safran2017depth, safran2019depth, vardi2021size, yarotsky2017error}.
In contrast, we focus on exact representations, where we show an even more drastic trade-off: decreasing the depth from logarithmic to sublogarithmic yields that no finite width is sufficient at all any more.

The duality between CPWL functions computed by neural networks and Newton polytopes inspired by tropical geometry has been used in several other works about neural networks before \citep{maragos2021tropical}, for example to analyze the shape of decision boundaries \citep{alfarra2020decision, Zhang:Tropical} or to count and bound the number of linear pieces \citep{charisopoulos2018tropical, hertrich2021towards, montufar2022sharp}.

\section{Preliminaries}

We write $[n]\coloneqq\{1,2,\dots,n\}$ for the set of natural numbers up to $n$ (without zero).

\subsection{ReLU Neural Networks}

For any $n\in\N$, let $\sigma\colon\R^n\to\R^n$ be the component-wise \emph{rectifier} function \[\sigma(x)=(\max\{0,x_1\},\max\{0,x_2\},\dots,\max\{0,x_n\}).\]

For any \emph{number of hidden layers} $k\in\N$, a \emph{$(k+1)$-layer feedforward neural network with rectified linear units} (ReLU neural network) is given by $k+1$ affine transformations $T^{(\ell)}\colon\R^{n_{\ell-1}}\to\R^{n_\ell}$, $x\mapsto A^{(\ell)}x+b^{(\ell)}$, for $\ell\in[k+1]$. It is said to \emph{compute} or \emph{represent} the function $f\colon\R^{n_0}\to\R^{n_{k+1}}$ given by
\[
f=T^{(k+1)}\circ\sigma\circ T^{(k)}\circ\sigma\circ\dots\circ T^{(2)}\circ\sigma\circ T^{(1)}.
\]
The matrices $A^{(\ell)}\in\R^{n_{\ell}\times n_{\ell-1}}$ are called the \emph{weights} and the vectors $b^{(\ell)}\in\R^{n_\ell}$ are the \emph{biases} of the $\ell$-th layer. The number $n_\ell\in\N$ is called the \emph{width} of the $\ell$-th layer. The maximum width of all hidden layers $\max_{\ell\in[k]} n_\ell$ is called the \emph{width} of the neural network. Further, we say that the neural network has \emph{depth} $k+1$ and \emph{size} $\sum_{\ell=1}^k n_\ell$.

Often, neural networks are represented as layered, directed, acyclic graphs where each dimension of each layer (including \emph{input layer} $\ell=0$ and \emph{output layer} $\ell=k+1$) is one vertex, weights are arc labels, and biases are node labels. The vertices of this graph are called \emph{neurons}. Each neuron computes an affine transformation of the outputs of their predecessors, applies the ReLU function on top of that, and outputs the result.

\subsection{Polytopes \& Lattices}

We give a brief overview of necessary notions for polytopes and lattices and refer to~\cite[Ch.~4,7-9]{sch} for further reading. 
For two arbitrary sets $X, Y \subseteq \R^n$, one can define their \emph{Minkowski sum} $X+Y = \SetOf{x+y}{x \in X,\,y\in Y}$.
By $\Span (X)$ we denote the usual linear hull of the set $X$, that is, the smallest linear subspace containing $X$. The affine hull $\Aff (X)$ is the smallest affine subspace containing $X$. Finally, with a set $X$ we associate a linear space $\Lin(X)=\Span\SetOf{x-y}{ x,y\in X}$, which is $\Aff (X)$ shifted to the origin. Note that $\Lin(X)$ is usually a strict subspace of $\Span (X)$, unless $\Aff (X)$ contains the origin, in which case all three notions coincide.

For a set $X \subset \R^n$, its \emph{convex hull} is
\[
\conv(X) = \SetOf{\sum_{s \in S}\lambda_ss}{S \subseteq X \text{ finite}, \lambda_s \geq 0 \text{ for all } s\in S, \sum_{s \in S}\lambda_s = 1} \enspace .
\]
A \emph{polytope} is the convex hull of a finite set $V \subset \R^n$.
Given a polytope $P \subset \R^n$, a \emph{face} of $P$ is a set of the form $\argmin\SetOf{\trans{c}x}{x \in P}$ for some $c \in \R^n$; a face of a polytope is again a polytope. 
The \emph{dimension} $\dim(P)$ of $P$ is the dimension of $\Lin(P)$.
By convention, we also call the empty set a face of $P$ of dimension $-1$. 
A face of dimension $0$ is a \emph{vertex} and a face of dimension $\dim(P) - 1$ is a \emph{facet}.
If all vertices belong to $\Q^n$, we call $P$ \emph{rational};
even more, $P$ is a \emph{lattice polytope} if all its vertices are \emph{integral}, that means they lie in $\Z^n$.

An important example of a (lattice) polytope is the \emph{simplex} $\Delta^n_0=\conv\{0,e_1,e_2,\dots,e_n\}\subseteq\R^n$ spanned by the origin and all $n$ standard basis vectors.

Finally, we define \emph{(regular polyhedral) subdivisions} of a polytope.
Let $\tilde P \subset \R^{n+1}$ be a polytope and let $P \subset \R^n$ be its image under the projection $\proj_{[n]} \colon \R^{n+1} \to \R^n$ forgetting the last coordinate.
A \emph{lower face} of $\tilde P$ is a set of minimizers with respect to a linear objective $(\trans c,1)\in\R^n\times\R$.
The projection of the lower faces of $\tilde{P}$ to $P \subset \R^n$ forms a \emph{subdivision} of $P$, that is, a collection of polytopes which intersect in common faces and cover $P$.
An example of a subdivision of a polytope in the plane is given in \Cref{fig:regularSubdivision}.

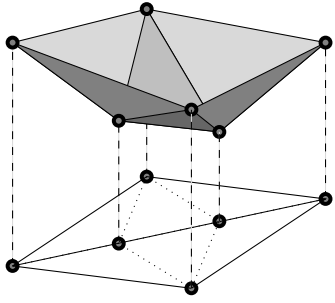
\begin{figure}[tb]
  \centering
  \begin{tikzpicture}[y=-1cm,scale=.234]

\path[draw=black,fill=black!15] (12.7,1.905) -- (16.8275,8.89) -- (22.86,3.81) -- cycle;
\path[draw=black,fill=black!15] (12.7,1.905) -- (5.08,3.81) -- (11.1125,8.255);
\path[draw=black,fill=black!15,dashed] (5.08,16.51) -- (5.08,3.81);
\path[draw=black,fill=black!15,dashed] (11.1125,15.24) -- (11.1125,8.255);
\path[draw=black,fill=black!15,dashed] (22.86,12.7) -- (22.86,3.81);
\path[draw=black,fill=black!15,dashed] (12.7,11.43) -- (12.7,8.255);
\path[draw=black,fill=black!25] (12.7,1.905) -- (11.1125,8.255) -- (16.8275,8.89) -- cycle;


\path[draw=black,fill=black!50] (15.24,7.62) -- (22.86,3.81) -- (16.8275,8.89);
\path[draw=black,fill=black!50] (11.1125,8.255) -- (5.08,3.81) -- (15.24,7.62);
\path[draw=black,fill=black!60] (15.24,7.62) -- (11.1125,8.255) -- (16.8275,8.89) -- cycle;

\draw[color=black,fill=black!50,line width=1.5pt] (5.08,16.51) circle (8pt);
\draw[color=black,fill=black!50,line width=1.5pt] (12.7,11.43) circle (8pt);
\draw[color=black,fill=black!50,line width=1.5pt] (15.24,17.78) circle (8pt);
\draw[color=black,fill=black!50,line width=1.5pt] (22.86,12.7) circle (8pt);
\draw[color=black,fill=black!50,line width=1.5pt] (11.1125,15.24) circle (8pt);
\draw[color=black,fill=black!50,line width=1.5pt] (16.8275,13.97) circle (8pt);
\draw[color=black,fill=black!50,line width=1.5pt] (11.1125,8.255) circle (8pt);
\draw[color=black,fill=black!50,line width=1.5pt] (16.8275,8.89) circle (8pt);
\draw[color=black,fill=black!50,line width=1.5pt] (22.86,3.81) circle (8pt);
\draw[color=black,fill=black!50,line width=1.5pt] (12.7,1.905) circle (8pt);
\draw[color=black,fill=black!50,line width=1.5pt] (5.08,3.81) circle (8pt);
\draw[color=black,fill=black!50,line width=1.5pt] (15.24,7.62) circle (8pt);
\draw[black] (5.08,16.51) -- (15.24,17.78);
\draw[black] (15.24,17.78) -- (22.86,12.7) -- (12.7,11.43) -- (5.08,16.51);
\fill[draw=black,dotted] (5.08,16.51) -- (22.86,12.7);
\draw[dotted,black] (12.7,11.43) -- (11.1125,15.24) -- (15.24,17.78) -- (16.8275,13.97) -- cycle;

\path[draw=black,fill=black!15,dashed] (16.8275,13.97) -- (16.8275,8.89);
\path[draw=black,fill=black!15,dashed] (15.24,17.78) -- (15.24,7.62);

\end{tikzpicture}%
  \caption{A regular subdivision: the quadrilateral in $\R^2$ is subdivided into six full-dimensional cells which arise as projections of lower faces of the convex hull of the lifted points in $\R^3$.}
  \label{fig:regularSubdivision}
\end{figure}


\bigskip


A \emph{lattice} is a subset of $\R^n$ of the form
\[
L=\SetOf{B \cdot z}{z \in \Z^p}
\]
for some matrix $B \in \R^{n \times p}$ with linearly independent columns.
In this case, we call the columns $b^{(1)}, \ldots, b^{(p)}$ of $B$ a \emph{lattice basis}.
Every element in $L$ can be written uniquely as a linear combination of $b^{(1)}, \ldots, b^{(p)}$ with integer coefficients.
A choice of lattice basis identifies $L$ with $\Z^p$.

The classic example of a lattice is $\Z^n$ itself.
In this paper, we will only work with lattices that arise as
\[
\SetOf{x \in \Z^n}{A \cdot x = 0} \enspace ,
\]
for some $A \in \Q^{q \times n}$, that is, the intersection of $\Z^n$ with a rational subspace of $\R^n$.

For example, the vectors $\vektor{2}{1}, \vektor{1}{2}$ form a basis of $\R^2$ as a vector space, but they do not form a lattice basis of $\Z^2$. Instead, they generate the smaller lattice $\smallSetOf{x \in \Z^2}{3 \text{ divides } x_1+x_2}$.
Choosing two bases for the same lattice yields a change of bases
matrix with integral entries whose inverse is also integral. It must
therefore have determinant $\pm1$.

We will mainly deal with \emph{affine lattices}, that is, sets of the form $K = L+v$ where $L \subset \R^n$ is a lattice and $v \in \R^n$.
Then, $b_0, \ldots, b_r \in K$ form an \emph{affine lattice basis} if $b_1-b_0, \ldots, b_r-b_0$ is a
(linear) lattice basis of the linear lattice $L = K - K$ parallel to $L$, that is, every element of $K$ is a unique affine combination of $b_0, \ldots, b_r$ with integral coefficients.

\subsection{Normalized volume}\label{sec:normalvol}

The main tool in our proof is the \emph{normalized volume} of faces of (Newton) polytopes.
We measure the volume of a (possibly lower dimensional) rational polytope $P \subset \R^n$ inside its affine hull $\Aff(P)$. 
For this, we choose a lattice basis of $\Lin(P) \cap \Z^n$ inside the linear space $\Lin(P)$ parallel to $\Aff(P)$.
Since $P$ is a rational polytope, this lattice basis gives rise to a linear transform from $\Lin(P)$ onto $\R^r$ ($r = \dim P$) by mapping the lattice basis vectors to the standard unit vectors. 
Now, the \emph{normalized volume} $\Vol(P)$ of $P$ is $r!$ times the Euclidean volume of its image under this transformation. 
The scaling factor $r!$ is chosen so that, for each $n \geq 1$, the normalized volume of the simplex $\Delta^n_0$ is one. 

The fact that any two lattice bases differ by an integral matrix with integral inverse (and hence determinant $\pm1$) ensures that this is well defined.
For full-dimensional polytopes, this yields just a scaled version of the Euclidean volume. But for lower-dimensional polytopes, our normalization with respect to the lattice differs from the Euclidean normalization (cf.~Figure~\ref{fig:eg:volume}).
For us, the crucial property is that every lattice polytope has integral volume (see, e.g., \cite[\S3.5,\S5.4]{BeckRobins}). 
This will allow us to argue using divisibility properties of volumes.

\begin{figure}[tb]
	\begin{minipage}[t]{0.47\linewidth}
		\centering
		\ifx\XFigwidth\undefined\dimen1=0pt\else\dimen1\XFigwidth\fi
\divide\dimen1 by 4973
\ifx\XFigheight\undefined\dimen3=0pt\else\dimen3\XFigheight\fi
\divide\dimen3 by 2843
\ifdim\dimen1=0pt\ifdim\dimen3=0pt\dimen1=1184sp\dimen3\dimen1
  \else\dimen1\dimen3\fi\else\ifdim\dimen3=0pt\dimen3\dimen1\fi\fi
\tikzpicture[x=+\dimen1, y=+\dimen3, scale=2]
{\ifx\XFigu\undefined\catcode`\@11
\def\temp{\alloc@1\dimen\dimendef\insc@unt}\temp\XFigu\catcode`\@12\fi}
\XFigu1184sp
\ifdim\XFigu<0pt\XFigu-\XFigu\fi
\clip(5917,-4883) rectangle (10883,-2317);
\tikzset{inner sep=+0pt, outer sep=+0pt}
\pgfsetlinewidth{+30\XFigu}
\pgfsetstrokecolor{red}
\draw[ultra thick] (6600,-4800)--(6000,-4800);
\draw[ultra thick] (7200,-4800)--(6600,-4200);
\draw[ultra thick] (6000,-2400)--(7800,-4800);
\pgfsetstrokecolor{blue}
\draw[ultra thick] (6600,-2400)--(7800,-3600);
\draw[ultra thick] (6000,-3000)--(6000,-4200);
\draw[ultra thick] (9000,-2400)--(9600,-3600)--(10200,-4200)--(9000,-2400);
\pgfsetstrokecolor{red}
\draw[ultra thick] (9000,-3600)--(8400,-4200)--(9000,-4800)--(9600,-4200)--(9000,-3600);
\draw[ultra thick] (10800,-2400)--(10200,-2400)--(10200,-3000)--(10800,-4200)--(10800,-2400);
\pgfsetfillcolor{red}
\pgftext[base,left,at=\pgfqpointxy{6200}{-4700}] {\fontsize{6}{4.8}$1$}
\pgftext[base,left,at=\pgfqpointxy{7200}{-4000}] {\fontsize{6}{4.8}$1$}
\pgfsetfillcolor{blue}
\pgftext[base,right,at=\pgfqpointxy{6225}{-3900}] {\fontsize{6}{4.8}$2$} 
\pgfsetfillcolor{red}
\pgftext[base,left,at=\pgfqpointxy{6900}{-4425}] {\fontsize{6}{4.8}$1$}
\pgfsetfillcolor{blue}
\pgftext[base,left,at=\pgfqpointxy{7050}{-2775}] {\fontsize{6}{4.8}$2$}
\pgftext[base,right,at=\pgfqpointxy{9075}{-2775}] {\fontsize{6}{4.8}$1$}
\pgfsetfillcolor{red}
\pgftext[base,left,at=\pgfqpointxy{8850}{-4600}] {\fontsize{6}{4.8}$4$}
\pgftext[base,left,at=\pgfqpointxy{10350}{-2775}] {\fontsize{6}{4.8}$4$}
\pgfsetlinewidth{+7.5\XFigu}
\pgfsetstrokecolor{.}
\pgfsetfillcolor{.}
\filldraw  (6600,-3000) circle [radius=+75];
\filldraw  (6600,-2400) circle [radius=+75];
\filldraw  (6600,-4200) circle [radius=+75];
\filldraw  (6600,-3600) circle [radius=+75];
\filldraw  (6000,-4800) circle [radius=+75];
\filldraw  (6600,-4800) circle [radius=+75];
\filldraw  (6000,-3000) circle [radius=+75];
\filldraw  (6000,-2400) circle [radius=+75];
\filldraw  (6000,-4200) circle [radius=+75];
\filldraw  (6000,-3600) circle [radius=+75];
\filldraw  (10800,-3000) circle [radius=+75];
\filldraw  (10800,-2400) circle [radius=+75];
\filldraw  (10800,-4200) circle [radius=+75];
\filldraw  (10800,-3600) circle [radius=+75];
\filldraw  (10800,-4800) circle [radius=+75];
\filldraw  (10200,-3000) circle [radius=+75];
\filldraw  (10200,-2400) circle [radius=+75];
\filldraw  (10200,-4200) circle [radius=+75];
\filldraw  (10200,-3600) circle [radius=+75];
\filldraw  (9600,-3000) circle [radius=+75];
\filldraw  (9600,-2400) circle [radius=+75];
\filldraw  (9600,-4200) circle [radius=+75];
\filldraw  (9600,-3600) circle [radius=+75];
\filldraw  (10200,-4800) circle [radius=+75];
\filldraw  (9600,-4800) circle [radius=+75];
\filldraw  (9000,-3000) circle [radius=+75];
\filldraw  (9000,-2400) circle [radius=+75];
\filldraw  (9000,-4200) circle [radius=+75];
\filldraw  (9000,-3600) circle [radius=+75];
\filldraw  (8400,-3000) circle [radius=+75];
\filldraw  (8400,-2400) circle [radius=+75];
\filldraw  (8400,-4200) circle [radius=+75];
\filldraw  (8400,-3600) circle [radius=+75];
\filldraw  (9000,-4800) circle [radius=+75];
\filldraw  (8400,-4800) circle [radius=+75];
\filldraw  (7800,-3000) circle [radius=+75];
\filldraw  (7800,-2400) circle [radius=+75];
\filldraw  (7800,-4200) circle [radius=+75];
\filldraw  (7800,-3600) circle [radius=+75];
\filldraw  (7200,-3000) circle [radius=+75];
\filldraw  (7200,-2400) circle [radius=+75];
\filldraw  (7200,-4200) circle [radius=+75];
\filldraw  (7200,-3600) circle [radius=+75];
\filldraw  (7800,-4800) circle [radius=+75];
\filldraw  (7200,-4800) circle [radius=+75];
\endtikzpicture%
		\caption[Example volumes]{Some lattice segments and lattice polygons with their normalized volumes.
	          Note that the normalized volume of a lattice polytope differs from its Euclidean volume as it is measured with respect to the induced lattice in its affine hull. }
		\label{fig:eg:volume}
	\end{minipage}
	\hfill
	\begin{minipage}[t]{0.47\linewidth}
		\centering
		\ifx\XFigwidth\undefined\dimen1=0pt\else\dimen1\XFigwidth\fi
\divide\dimen1 by 4966
\ifx\XFigheight\undefined\dimen3=0pt\else\dimen3\XFigheight\fi
\divide\dimen3 by 2566
\ifdim\dimen1=0pt\ifdim\dimen3=0pt\dimen1=1184sp\dimen3\dimen1
  \else\dimen1\dimen3\fi\else\ifdim\dimen3=0pt\dimen3\dimen1\fi\fi
\tikzpicture[x=+\dimen1, y=+\dimen3, scale=2]
{\ifx\XFigu\undefined\catcode`\@11
\def\temp{\alloc@1\dimen\dimendef\insc@unt}\temp\XFigu\catcode`\@12\fi}
\XFigu1184sp
\ifdim\XFigu<0pt\XFigu-\XFigu\fi
\clip(5917,-4883) rectangle (10883,-2317);
\tikzset{inner sep=+0pt, outer sep=+0pt}
\pgfsetlinewidth{+30\XFigu}
\pgfsetstrokecolor{red}
\draw[ultra thick] (6600,-4800)--(6000,-4800);
\pgfsetstrokecolor{blue}
\draw[ultra thick] (6000,-3600)--(6000,-4800);
\draw[ultra thick] (8400,-4800)--(10800,-3600);
\pgfsetstrokecolor{red}
\draw[ultra thick] (6000,-3600)--(6600,-3600);
\pgfsetstrokecolor{blue}
\draw[ultra thick] (6600,-3675)--(6600,-4800);
\pgfsetstrokecolor{red}
\draw[ultra thick] (7800,-3600)--(8400,-4800);
\draw[ultra thick] (10200,-2400)--(10800,-3600);
\pgfsetstrokecolor{blue}
\draw[ultra thick] (7770,-3600)--(10170,-2400);
\pgfsetfillcolor{blue}
\pgftext[base,left,at=\pgfqpointxy{9375}{-4575}] {\fontsize{6}{4.8}$Q$}
\pgfsetfillcolor{red}
\pgftext[base,right,at=\pgfqpointxy{7800}{-3975}] {\fontsize{6}{4.8}$P$}
\pgfsetfillcolor{.}
\pgftext[base,left,at=\pgfqpointxy{8795}{-3375}] {\fontsize{6}{4.8}$P+Q$}
\pgftext[base,at=\pgfqpointxy{6500}{-3375}] {\fontsize{6}{4.8}$P \times Q$}
\pgfsetlinewidth{+7.5\XFigu}
\pgfsetstrokecolor{.}
\filldraw  (6600,-3000) circle [radius=+75];
\filldraw  (6600,-2400) circle [radius=+75];
\filldraw  (6600,-4200) circle [radius=+75];
\filldraw  (6600,-3600) circle [radius=+75];
\filldraw  (6000,-4800) circle [radius=+75];
\filldraw  (6600,-4800) circle [radius=+75];
\filldraw  (6000,-3000) circle [radius=+75];
\filldraw  (6000,-2400) circle [radius=+75];
\filldraw  (6000,-4200) circle [radius=+75];
\filldraw  (6000,-3600) circle [radius=+75];
\filldraw  (10800,-3000) circle [radius=+75];
\filldraw  (10800,-2400) circle [radius=+75];
\filldraw  (10800,-4200) circle [radius=+75];
\filldraw  (10800,-3600) circle [radius=+75];
\filldraw  (10800,-4800) circle [radius=+75];
\filldraw  (10200,-3000) circle [radius=+75];
\filldraw  (10200,-2400) circle [radius=+75];
\filldraw  (10200,-4200) circle [radius=+75];
\filldraw  (10200,-3600) circle [radius=+75];
\filldraw  (9600,-3000) circle [radius=+75];
\filldraw  (9600,-2400) circle [radius=+75];
\filldraw  (9600,-4200) circle [radius=+75];
\filldraw  (9600,-3600) circle [radius=+75];
\filldraw  (10200,-4800) circle [radius=+75];
\filldraw  (9600,-4800) circle [radius=+75];
\filldraw  (9000,-3000) circle [radius=+75];
\filldraw  (9000,-2400) circle [radius=+75];
\filldraw  (9000,-4200) circle [radius=+75];
\filldraw  (9000,-3600) circle [radius=+75];
\filldraw  (8400,-3000) circle [radius=+75];
\filldraw  (8400,-2400) circle [radius=+75];
\filldraw  (8400,-4200) circle [radius=+75];
\filldraw  (8400,-3600) circle [radius=+75];
\filldraw  (9000,-4800) circle [radius=+75];
\filldraw  (8400,-4800) circle [radius=+75];
\filldraw  (7800,-3000) circle [radius=+75];
\filldraw  (7800,-2400) circle [radius=+75];
\filldraw  (7800,-4200) circle [radius=+75];
\filldraw  (7800,-3600) circle [radius=+75];
\filldraw  (7200,-3000) circle [radius=+75];
\filldraw  (7200,-2400) circle [radius=+75];
\filldraw  (7200,-4200) circle [radius=+75];
\filldraw  (7200,-3600) circle [radius=+75];
\filldraw  (7800,-4800) circle [radius=+75];
\filldraw  (7200,-4800) circle [radius=+75];
\endtikzpicture%
		\caption[Minkowski sum volume]{
			The normalized volume of an affine product of $P$ and $Q$ can be calculated as the volume of the Cartesian product times the integral absolute value of the determinant of the linear map sending $P\times Q$ (left) to $P+Q$ (right):
			$\Vol(P+Q) = \binom{1+1}{1} \cdot \left|
			\det \left(
			\begin{smallmatrix}
				-1&2\\2&1
			\end{smallmatrix}
			\right)
			\right|
			\cdot \Vol(P) \cdot \Vol(Q)$.}
		\label{fig:productVolume}
	\end{minipage}
\end{figure}

We give visualizations of the following fundamental statements and defer the actual proofs to the appendix. 

\begin{restatable}{lemma}{volumesum} \label{lemma:volume:sum}
	Let $P,Q \subset \R^n$ be two lattice polytopes with $i = \dim(P)$ and $j = \dim(Q)$.
	If we have $\dim(P+Q) = \dim(P) + \dim(Q)$ then
	$$ {i+j\choose i} \cdot \Vol(P) \cdot \Vol(Q) \ \text{ divides } \ \Vol(P+Q) \,.$$
\end{restatable}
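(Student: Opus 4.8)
The plan is to prove the stronger, exact identity
\[
\Vol(P+Q) \;=\; \binom{i+j}{i}\cdot m\cdot \Vol(P)\cdot \Vol(Q)
\]
for a suitable positive integer $m$; since all three normalized volumes are integers (lattice polytopes have integral volume) and $\binom{i+j}{i}\in\N$, this immediately yields the divisibility claim. First I would use that normalized volume is invariant under translation (it only sees $\Lin(\cdot)$ and the Euclidean volume inside the affine hull) to translate $P$ and $Q$ independently by integral vectors so that $\0\in P$ and $\0\in Q$; then $\Aff=\Lin$ for all of $P$, $Q$, $P+Q$. The dimension hypothesis $\dim(P+Q)=\dim(P)+\dim(Q)$ now forces the internal direct sum decomposition $\Lin(P+Q)=\Lin(P)\oplus\Lin(Q)$ of linear spaces, since the left-hand side is spanned by $\Lin(P)\cup\Lin(Q)$ and the dimensions match up.

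Next I would set $L_P:=\Lin(P)\cap\Z^n$, $L_Q:=\Lin(Q)\cap\Z^n$, and $L:=\Lin(P+Q)\cap\Z^n$, which have ranks $i$, $j$, $i+j$. Because the ambient linear spaces are in direct sum, $L_P\oplus L_Q$ is a full-rank sublattice of $L$; let $m:=[L:L_P\oplus L_Q]$, which is finite. Fix lattice bases of $L_P$ and of $L_Q$; their concatenation is simultaneously a $\Z$-basis of $L_P\oplus L_Q$ and an $\R$-basis of $\Lin(P+Q)$, and the associated coordinate map $\psi\colon\Lin(P+Q)\to\R^{i+j}$ sends it to the standard basis. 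Since $P\subset\Lin(P)$ and $Q\subset\Lin(Q)$, one checks that $\psi(P+Q)=\psi(P)+\psi(Q)$ is exactly the Cartesian product of $\proj_{[i]}\psi(P)$ and $\proj_{\{i+1,\dots,i+j\}}\psi(Q)$, which are precisely the images of $P$ and $Q$ under coordinate maps of the type used to define $\Vol(P)$ and $\Vol(Q)$. Hence $\vol(\psi(P+Q))=\vol(\psi(P))\cdot\vol(\psi(Q))=\tfrac{\Vol(P)}{i!}\cdot\tfrac{\Vol(Q)}{j!}$ by multiplicativity of Euclidean volume under products.

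Finally, let $\chi\colon\Lin(P+Q)\to\R^{i+j}$ be a coordinate map defining $\Vol(P+Q)$, i.e. one sending a $\Z$-basis of $L$ to the standard basis. Then $A:=\chi\circ\psi^{-1}$ maps $\Z^{i+j}=\psi(L_P\oplus L_Q)$ into $\Z^{i+j}=\chi(L)$, so $A$ is an integer matrix, and it maps $\Z^{i+j}$ onto a sublattice of index $m$, whence $\lvert\det A\rvert=m$. Therefore $\vol(\chi(P+Q))=m\cdot\vol(\psi(P+Q))$, and multiplying by $(i+j)!$ gives the displayed identity. The main obstacle is precisely this last step of lattice bookkeeping: one must verify carefully that $L_P\oplus L_Q$ really is a finite-index sublattice of $L$ (which needs the direct-sum decomposition) and that the change-of-coordinates matrix $A$ is genuinely integral with $\lvert\det A\rvert$ equal to that index — this is where the extra factor $m$ enters, and it is also what makes a naive "rescale to clear denominators" argument fail in the non-lattice setting. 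Everything else — translation invariance, the identification of $\psi(P+Q)$ with a Cartesian product, and the factor $(i+j)!/(i!\,j!)$ — is routine.
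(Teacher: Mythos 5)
Your argument is correct and is essentially the paper's own proof: the paper also identifies $P+Q$ with the image of $P\times Q$ under an affine bijection that maps lattice points to lattice points, so that $\Vol(P+Q)$ equals $\Vol(P\times Q)=\binom{i+j}{i}\Vol(P)\Vol(Q)$ times an integral determinant, which is exactly your index $m=[L:L_P\oplus L_Q]$. Your version just spells out the lattice bookkeeping (the coordinate maps $\psi,\chi$ and why $\lvert\det A\rvert=m$) that the paper leaves implicit.
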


If $P$ and $Q$ fulfill the assumptions of \Cref{lemma:volume:sum}, then we call $P + Q$ an \emph{affine product} of~$P$ and~$Q$.
This is equivalent to the definition given in the introduction; we visualize the Cartesian product $P \times Q$ for comparison in \Cref{fig:productVolume}.

\begin{restatable}{lemma}{volumejoin} \label{lemma:volume:join}
	Let $P,Q \subset \R^n$ be two lattice polytopes with $i = \dim(P)$ and $j = \dim(Q)$.
	If $\dim(\conv(P \cup Q)) = \dim(P) + \dim(Q) + 1$ then 
	$$
	\Vol(P) \cdot \Vol(Q) \ 
	\text{ divides } \
	\Vol(\conv(P\cup Q)) \,.
	$$
\end{restatable}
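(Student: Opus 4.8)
The plan is to reduce the statement to one explicit volume computation after moving $P$ and $Q$ into a lattice-adapted position, in close parallel to the proof of \Cref{lemma:volume:sum}. Fix a vertex $p_0$ of $P$ and a vertex $q_0$ of $Q$; since $P$ and $Q$ are lattice polytopes, the vector $w\coloneqq q_0-p_0$ lies in $\Z^n$. I would first observe that the hypothesis $\dim(\conv(P\cup Q))=i+j+1$ is exactly the statement that the linear space $V\coloneqq\Lin(\conv(P\cup Q))$ decomposes as a \emph{direct} sum $V=\Lin(P)\oplus\Lin(Q)\oplus\R w$: indeed $\Aff(P\cup Q)=p_0+\Lin(P)+\Lin(Q)+\R w$ (each inclusion is a short check, using that an affine subspace containing $p_0,q_0$ contains the line through them and, together with $\Aff(P)$ and $\Aff(Q)$, the claimed space), so the sum of subspaces has dimension at most $i+j+1$, and equality forces directness. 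Consequently, writing $L_P\coloneqq\Lin(P)\cap\Z^n$ and $L_Q\coloneqq\Lin(Q)\cap\Z^n$ for the lattices defining $\Vol(P)$ and $\Vol(Q)$, the sum $N\coloneqq L_P\oplus L_Q\oplus\Z w$ is a direct, full-rank sublattice of $M\coloneqq V\cap\Z^n$, the lattice defining $\Vol(\conv(P\cup Q))$.

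Next I would record the elementary fact — an immediate extension of the $\det=\pm1$ observation for two bases of one lattice — that refining a lattice multiplies normalized volume by the index: for every polytope $X$ with $\Lin(X)\subseteq V$ one has $\Vol_M(X)=[M:N]\cdot\Vol_N(X)$, because a change-of-basis matrix from a basis of $M$ to a basis of $N$ is integral with determinant $\pm[M:N]$. It therefore suffices to prove the sharper identity $\Vol_N(\conv(P\cup Q))=\Vol(P)\cdot\Vol(Q)$, since then $\Vol(\conv(P\cup Q))=[M:N]\cdot\Vol(P)\cdot\Vol(Q)$, which is divisible by $\Vol(P)\cdot\Vol(Q)$ as claimed.

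To prove this identity, pick lattice bases $u_1,\dots,u_i$ of $L_P$ and $v_1,\dots,v_j$ of $L_Q$; then $(u_1,\dots,u_i,v_1,\dots,v_j,w)$ is a lattice basis of $N$, and the associated affine coordinate map $\Phi$ (sending $p_0\mapsto 0$ and this basis to the standard basis of $\R^{i+j+1}$) carries $P$ to $\hat P\times\{0\}^{j+1}$ and $Q$ to $\{0\}^{i}\times\hat Q\times\{1\}$, where $\hat P\subseteq\R^i$, $\hat Q\subseteq\R^j$ satisfy $\Vol(P)=i!\cdot\vol_i(\hat P)$ and $\Vol(Q)=j!\cdot\vol_j(\hat Q)$ by definition of the normalized volume. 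Hence $\Phi(\conv(P\cup Q))=\conv\bigl((\hat P\times\{0\}^{j+1})\cup(\{0\}^{i}\times\hat Q\times\{1\})\bigr)$, and its slice at last coordinate $s\in[0,1]$ is precisely the box $(1-s)\hat P\times s\hat Q$. Integrating over $s$ via Fubini gives
\[
\vol_{i+j+1}\bigl(\Phi(\conv(P\cup Q))\bigr)\;=\;\vol_i(\hat P)\,\vol_j(\hat Q)\int_0^1(1-s)^i s^j\,\diff s\;=\;\frac{i!\,j!}{(i+j+1)!}\,\vol_i(\hat P)\,\vol_j(\hat Q),
\]
so multiplying by the normalization factor $(i+j+1)!$ yields $\Vol_N(\conv(P\cup Q))=i!\,j!\cdot\vol_i(\hat P)\,\vol_j(\hat Q)=\Vol(P)\cdot\Vol(Q)$, completing the argument.

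The only genuinely delicate point I anticipate is the bookkeeping of which lattice each normalized volume refers to: the volumes of $P$ and $Q$ are computed in their own affine hulls, but the volume of the join is computed in the possibly strictly finer lattice $M\supseteq N$, and one must get the direction of divisibility right — passing from $N$ to $M$ \emph{multiplies} the volume by the positive integer $[M:N]$ rather than dividing. Once the direct-sum decomposition forced by the dimension hypothesis and the Beta-integral $\int_0^1(1-s)^i s^j\,\diff s=i!\,j!/(i+j+1)!$ are in place, everything else is routine; note in particular that, in contrast to \Cref{lemma:volume:sum}, here the factorial in the denominator cancels the normalization factor completely, so no binomial coefficient appears.
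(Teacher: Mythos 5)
Your proof is correct and follows essentially the same route as the paper: the paper compares $\conv(P\cup Q)$ with the abstract join $P\join Q$ via an affine, lattice-point-preserving bijection whose integral determinant is exactly your index $[M:N]$, while your lattice-adapted coordinates and the Beta integral $\int_0^1(1-s)^i s^j\,\diff s = i!\,j!/(i+j+1)!$ just make explicit the fact, left implicit in the paper, that the join itself has normalized volume $\Vol(P)\cdot\Vol(Q)$. You also get the delicate direction of the lattice comparison ($\Vol_M = [M:N]\cdot\Vol_N$) right, so no gaps remain.
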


If $P$ and $Q$ fulfill the assumptions of \Cref{lemma:volume:join}, then we call $\conv(P \cup Q)$ the \emph{join} of~$P$ and~$Q$, compare \Cref{fig:joinVolume}. This definition is equivalent with the one given in the introduction.

\begin{figure}[tb]
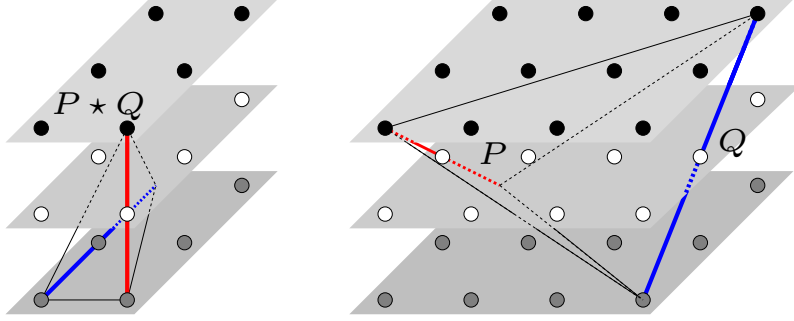

	\centering
	\ifx\XFigwidth\undefined\dimen1=0pt\else\dimen1\XFigwidth\fi
\divide\dimen1 by 8327
\ifx\XFigheight\undefined\dimen3=0pt\else\dimen3\XFigheight\fi
\divide\dimen3 by 3302
\ifdim\dimen1=0pt\ifdim\dimen3=0pt\dimen1=1184sp\dimen3\dimen1
  \else\dimen1\dimen3\fi\else\ifdim\dimen3=0pt\dimen3\dimen1\fi\fi
\tikzpicture[x=+\dimen1, y=+\dimen3, scale=2]
{\ifx\XFigu\undefined\catcode`\@11
\def\temp{\alloc@1\dimen\dimendef\insc@unt}\temp\XFigu\catcode`\@12\fi}
\XFigu1184sp
\ifdim\XFigu<0pt\XFigu-\XFigu\fi
\clip(824,-9751) rectangle (9151,-6449);
\tikzset{inner sep=+0pt, outer sep=+0pt}
\pgfsetfillcolor{.!25}
\fill (825,-9750)--(2175,-9750)--(3675,-8250)--(2325,-8250)--(825,-9750);
\fill (4425,-9750)--(7575,-9750)--(9075,-8250)--(5925,-8250)--(4425,-9750);
\pgfsetlinewidth{+7.5\XFigu}
\pgfsetfillcolor{.!50}
\filldraw  (2400,-8400) circle [radius=+75];
\filldraw  (6000,-8400) circle [radius=+75];
\pgfsetlinewidth{+60\XFigu}
\pgfsetstrokecolor{blue}
\draw[ultra thick] (1200,-9600)--(2400,-8400);
\draw[ultra thick] (8100,-8100)--(8700,-6600);
\pgfsetlinewidth{+7.5\XFigu}
\pgfsetstrokecolor{.}
\draw (2100,-7800)--(1200,-9600);
\draw (2400,-8400)--(2100,-7800);
\draw (2100,-9600)--(2400,-8400);
\pgfsetlinewidth{+60\XFigu}
\pgfsetstrokecolor{blue}
\draw[ultra thick] (7500,-9600)--(8700,-6600);
\pgfsetfillcolor{.!20}
\fill (825,-8850)--(2175,-8850)--(3675,-7350)--(2325,-7350)--(825,-8850);
\fill (4425,-8850)--(7575,-8850)--(9150,-7350)--(5925,-7350)--(4425,-8850);
\pgfsetdash{{+45\XFigu}{+45\XFigu}}{++0pt}
\draw (1950,-8850)--(2400,-8400);  
\pgfsetstrokecolor{red}
\pgfsetdash{}{+0pt}
\draw (5400,-8100)--(4800,-7800);
\pgfsetfillcolor{.!15}
\fill (825,-7950)--(2175,-7950)--(3675,-6450)--(2325,-6450)--(825,-7950);
\fill (4425,-7950)--(7575,-7950)--(9075,-6450)--(5925,-6450)--(4425,-7950);
\pgfsetdash{{+60\XFigu}{+60\XFigu}}{++0pt}
\draw (5400,-8100)--(6000,-8400);
\draw (4800,-7800)--(5100,-7950);
\pgfsetstrokecolor{blue}
\draw[ultra thick] (7950,-8475)--(8100,-8100);  
\draw[ultra thick] (8550,-6975)--(8700,-6600);  
\pgfsetdash{}{+0pt}
\draw[ultra thick] (8100,-8100)--(8700,-6600);
\pgfsetlinewidth{+7.5\XFigu}
\pgfsetstrokecolor{.}
\pgfsetdash{}{+0pt}
\draw (1200,-9600)--(2100,-9600);
\draw (4800,-7800)--(8700,-6600);
\pgfsetlinewidth{+60\XFigu}
\pgfsetstrokecolor{red}
\draw[ultra thick] (2100,-9600)--(2100,-7800);
\pgfsetlinewidth{+7.5\XFigu}
\pgfsetstrokecolor{.}
\draw (5025,-7950)--(6150,-8700);
\draw (6375,-8850)--(7500,-9600);
\draw (6567,-8850)--(7500,-9600);
\pgfsetfillcolor{blue}
\pgftext[base,left,at=\pgfqpointxy{8280}{-8040}] {\fontsize{6}{4.8}$Q$}
\pgfsetfillcolor{red}
\pgftext[base,left,at=\pgfqpointxy{5775}{-8175}] {\fontsize{6}{4.8}$P$}
\pgfsetfillcolor{.!50}
\filldraw  (3300,-8400) circle [radius=+75];
\filldraw  (2700,-9000) circle [radius=+75];
\filldraw  (1800,-9000) circle [radius=+75];
\filldraw  (1200,-9600) circle [radius=+75];
\filldraw  (2100,-9600) circle [radius=+75];
\pgfsetfillcolor{.}
\filldraw  (2400,-6600) circle [radius=+75];
\filldraw  (3300,-6600) circle [radius=+75];
\filldraw  (1800,-7200) circle [radius=+75];
\filldraw  (2700,-7200) circle [radius=+75];
\filldraw  (1200,-7800) circle [radius=+75];
\filldraw  (2100,-7800) circle [radius=+75];
\filldraw  (7800,-6600) circle [radius=+75];
\filldraw  (6000,-6600) circle [radius=+75];
\filldraw  (6900,-6600) circle [radius=+75];
\filldraw  (8700,-6600) circle [radius=+75];
\filldraw  (7200,-7200) circle [radius=+75];
\filldraw  (5400,-7200) circle [radius=+75];
\filldraw  (6300,-7200) circle [radius=+75];
\filldraw  (8100,-7200) circle [radius=+75];
\pgfsetfillcolor{white}
\filldraw  (7200,-8100) circle [radius=+75];
\filldraw  (5400,-8100) circle [radius=+75];
\filldraw  (6300,-8100) circle [radius=+75];
\filldraw  (8100,-8100) circle [radius=+75];
\pgfsetfillcolor{.!50}
\filldraw  (8700,-8400) circle [radius=+75];
\pgfsetfillcolor{white}
\filldraw  (6600,-8700) circle [radius=+75];
\filldraw  (4800,-8700) circle [radius=+75];
\filldraw  (5700,-8700) circle [radius=+75];
\filldraw  (7500,-8700) circle [radius=+75];
\pgfsetfillcolor{.}
\filldraw  (6600,-7800) circle [radius=+75];
\filldraw  (4800,-7800) circle [radius=+75];
\filldraw  (5700,-7800) circle [radius=+75];
\filldraw  (7500,-7800) circle [radius=+75];
\pgfsetfillcolor{.!50}
\filldraw  (7200,-9000) circle [radius=+75];
\filldraw  (5400,-9000) circle [radius=+75];
\filldraw  (6300,-9000) circle [radius=+75];
\filldraw  (8100,-9000) circle [radius=+75];
\filldraw  (6600,-9600) circle [radius=+75];
\filldraw  (4800,-9600) circle [radius=+75];
\filldraw  (5700,-9600) circle [radius=+75];
\filldraw  (7500,-9600) circle [radius=+75];
\pgfsetfillcolor{white}
\filldraw  (8700,-7500) circle [radius=+75];
\filldraw  (2700,-8100) circle [radius=+75];
\filldraw  (3300,-7500) circle [radius=+75];
\filldraw  (1800,-8100) circle [radius=+75];
\filldraw  (1200,-8700) circle [radius=+75];
\filldraw  (2100,-8700) circle [radius=+75];
\pgfsetdash{{+60\XFigu}{+60\XFigu}}{++0pt}
\draw (1575,-8850)--(2100,-7800)--(2400,-8400)--(2325,-8700);
\draw (6000,-8400)--(8700,-6600);
\draw (6000,-8400)--(7500,-9600);
\draw (4800,-7800)--(7500,-9600);
\pgfsetfillcolor{.}
\pgftext[base,at=\pgfqpointxy{1800}{-7650}] {\fontsize{6}{4.8}$P \join Q$}
\endtikzpicture%
	\caption[join volume]{
		The normalized volume of a join of $P$ and $Q$ can be calculated as the product of the normalized volumes of $P$ and $Q$ times the integral absolute value of the determinant of the linear map sending $P\join Q$ (left) to $\conv(P\cup Q)$ (right):
		$\Vol(\conv(P \cup Q)) = \left|
		\det \left(
		\begin{smallmatrix}
			-2&0&0\\2&1&-1\\0&1&1
		\end{smallmatrix}
		\right)
		\right|
		\cdot \Vol(P) \cdot \Vol(Q)$, where $P \join Q$ denotes an embedding of $P$ and $Q$ in orthogonal subspaces with distance $1$.} 
	\label{fig:joinVolume}
\end{figure}







\subsection{Newton Polytopes and Neural Networks}

The first ingredient to prove our main result is to use a previously discovered duality between CPWL functions and polytopes inspired by tropical geometry in order to translate the problem into the world of polytopes.

As a first step, let us observe that we may restrict ourselves to neural networks without biases. For this, we say a function $g\colon\R^n\to\R^m$ is \emph{positively homogeneous} if it satisfies $g(\lambda x)=\lambda g(x)$ for all~$\lambda\geq 0$.

\begin{lemma}[\citet{hertrich2021towards}]
	If a neural network represents a positively homogeneous function, then the same neural network with all bias parameters $b^{(\ell)}$ set to zero represents exactly the same function.
\end{lemma}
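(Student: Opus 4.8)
The plan is to exhibit the bias-free network as a rescaled limit of the original one: I claim that the function $\tilde f$ computed by the same network with all biases deleted satisfies $\tilde f(x)=\lim_{\lambda\to\infty}\tfrac1\lambda f(\lambda x)$ for every $x$, irrespective of whether $f$ is positively homogeneous. Granting this, the lemma is immediate, because positive homogeneity of $f$ gives $\tfrac1\lambda f(\lambda x)=f(x)$ for all $\lambda>0$, so the limit equals $f(x)$ and hence $\tilde f(x)=f(x)$ for all $x$.

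To prove the limit identity I would set up notation for the layer-wise activations: let $x^{(0)}(z)=z$ and $x^{(\ell)}(z)=\sigma\bigl(A^{(\ell)}x^{(\ell-1)}(z)+b^{(\ell)}\bigr)$ for $\ell\in[k]$, so that $f(z)=A^{(k+1)}x^{(k)}(z)+b^{(k+1)}$, and let $\tilde x^{(\ell)}(z)$ denote the same quantities with every $b^{(\ell)}$ replaced by $\0$. The heart of the argument is the claim that $\tfrac1\lambda x^{(\ell)}(\lambda x)\to\tilde x^{(\ell)}(x)$ as $\lambda\to\infty$, proved by induction on $\ell$. The base case $\ell=0$ is trivial. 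For the step, the key observation is that the component-wise rectifier is positively homogeneous, i.e.\ $\tfrac1\lambda\sigma(y)=\sigma(\tfrac1\lambda y)$ for $\lambda>0$; together with linearity of $A^{(\ell)}$ this yields
\[
\tfrac1\lambda\, x^{(\ell)}(\lambda x)=\sigma\Bigl(A^{(\ell)}\cdot\tfrac1\lambda x^{(\ell-1)}(\lambda x)+\tfrac1\lambda b^{(\ell)}\Bigr).
\]
As $\lambda\to\infty$ the argument of $\sigma$ tends to $A^{(\ell)}\tilde x^{(\ell-1)}(x)$ by the inductive hypothesis and because $\tfrac1\lambda b^{(\ell)}\to\0$, and continuity of $\sigma$ and of the linear map $A^{(\ell)}$ lets us pass the limit inside, giving $\sigma\bigl(A^{(\ell)}\tilde x^{(\ell-1)}(x)\bigr)=\tilde x^{(\ell)}(x)$. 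Running the identical computation on the output layer (with $\sigma$ replaced by the identity) gives $\tfrac1\lambda f(\lambda x)\to A^{(k+1)}\tilde x^{(k)}(x)=\tilde f(x)$, which is the claimed limit identity.

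I do not expect a genuine obstacle here; the only point requiring care is the order of manipulations in the inductive step — first pull the scalar $\tfrac1\lambda$ through $\sigma$ using its positive homogeneity, then take $\lambda\to\infty$ using continuity — and noting that exactly these two properties of the rectifier (together with linearity of the linear parts of the affine maps) are everything the proof uses. An alternative route avoiding limits would exploit that $f$ is CPWL, so that on each ray $\{\lambda x:\lambda\ge0\}$ both $f$ and $\tilde f$ are eventually affine and one compares their slopes; but the rescaling-limit formulation above is shorter and does not invoke the piecewise-linear structure.
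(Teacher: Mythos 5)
Your proof is correct. Note that the paper itself offers no proof of this lemma---it is imported from \citet{hertrich2021towards}---so there is no internal argument to compare against; your rescaling-limit argument is a valid, self-contained justification. It is also essentially the same mechanism as in the cited reference, which observes that the bias terms only contribute a uniformly bounded amount to the output (since each ReLU layer is Lipschitz), so that the original function $f$ and the bias-free function $\tilde f$, both positively homogeneous, differ by a bounded function and hence coincide; your layer-wise induction with $\tfrac1\lambda x^{(\ell)}(\lambda x)\to\tilde x^{(\ell)}(x)$ makes the same ``biases vanish under rescaling'' idea explicit rather than packaging it as a boundedness estimate. The two steps you flag---pulling $\tfrac1\lambda$ through $\sigma$ by positive homogeneity of the rectifier, then passing to the limit by continuity---are exactly the points that need care, and you handle them correctly, so nothing is missing.
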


Since $\max\{0,x_1,\dots,x_{2^k}\}$ is positively homogeneous, this implies that we can focus on neural networks without biases in order to prove \Cref{thm:maxnotcontained}. Functions computed by such neural networks are always positively homogeneous.

Let $f$ be a positively homogeneous, convex CPWL function. Convexity implies that we can write $f$ as the maximum of its linear pieces, that is, $f(x)=\max\{\trans a_1x,\dots,\trans a_p x\}$. The \emph{Newton polytope} $P_f$ corresponding to $f$ is defined as the convex hull of the coefficient vectors, that is, $P_f=\conv\{a_1,\dots,a_p\}$. It turns out that the two basic operations $+$ and $\max$ (applied pointwise) on functions translate nicely to Minkowski sum and convex hull for the corresponding Newton polytopes: $P_{f+g} = P_f + P_g$ and $P_{\max\{f,g\}} = \conv\{P_f\cup P_g\}$, compare~\citet{Zhang:Tropical}. Since a neural network is basically an alternation of affine transformations (that is, weighted sums) and maxima computations, this motivates the following definition of classes $\nnpoly_k$ of polytopes which intuitively correspond to integral neural networks with $k$ hidden layers. Note that the class $\nnpoly_k$ contains polytopes of different dimensions.

We begin with $\nnpoly_0$, the set of lattice points.
For $k \geq 0$, we define
\begin{equation}
	\label{eq:defpk}
	\begin{aligned}
		\nnpoly_{k+1}' &= \SetOf{\conv(Q_1 \cup Q_2)}{Q_1, Q_2 \in \nnpoly_k} \enspace , \\
		\nnpoly_{k+1} &= \SetOf{Q_1 + \dots + Q_{\ell}}{Q_1, \dots, Q_{\ell} \in \nnpoly_{k+1}'} \enspace .
	\end{aligned}
\end{equation}

The correspondence of $\nnpoly_k$ to a neural network with $k$ hidden layers can be made formal by the following.
The difference in the theorem accounts for the fact that $f$ is not necessarily convex, and even if it is, intermediate functions computed by the neural network might be non-convex.

\begin{restatable}{theorem}{thmnewton}\label{thm:newton}
	A positively homogeneous CPWL function $f$ can be represented by an integral $k$-hidden-layer neural network if and only if $f=g-h$ for two convex, positively homogeneous CPWL functions $g$ and $h$ with $P_g,P_h\in\nnpoly_k$.
\end{restatable}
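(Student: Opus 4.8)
The plan is to prove both directions by induction on the number of hidden layers $k$, using throughout the tropical dictionary $P_{f+g}=P_f+P_g$ and $P_{\max\{f,g\}}=\conv(P_f\cup P_g)$ between positively homogeneous convex CPWL functions and (lattice) polytopes, together with the reduction to bias-free networks: since $f$ is positively homogeneous, by the lemma of \citet{hertrich2021towards} we may assume the network has all biases zero, and then every function computed at every neuron is again positively homogeneous, so the duality applies to it. I would also first record the two routine facts that $\nnpoly_k\subseteq\nnpoly_{k+1}'\subseteq\nnpoly_{k+1}$ (the first inclusion because $\conv(R\cup R)=R$, the second by taking $\ell=1$) and that each class is closed under integer dilation $R\mapsto cR$, which equals the $c$-fold Minkowski sum of $R$; these correspond, on the function side, to scaling by a nonnegative integer.

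For the ``only if'' direction I would induct on $k$. In the base case $k=0$ a bias-free network is linear, $f(x)=\trans a x$ with $a\in\Z^n$, so $f=g-h$ with $g=f$, $h=0$ and $P_g=\{a\},P_h=\{\0\}\in\nnpoly_0$. For the step, consider a bias-free integral $(k+1)$-hidden-layer network computing $f$. Each pre-activation $z_j$ of the $(k+1)$-st hidden layer is computed by an integral $k$-hidden-layer network, so by the inductive hypothesis $z_j=g_j-h_j$ with $g_j,h_j$ convex, positively homogeneous, CPWL and $P_{g_j},P_{h_j}\in\nnpoly_k$. The neuron output is $\sigma(z_j)=\max\{0,g_j-h_j\}=\max\{g_j,h_j\}-h_j=:g_j^*-h_j^*$, where $P_{g_j^*}=\conv(P_{g_j}\cup P_{h_j})\in\nnpoly_{k+1}'$ and $P_{h_j^*}=P_{h_j}\in\nnpoly_k\subseteq\nnpoly_{k+1}'$. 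Finally the output neuron computes an integer combination $f=\sum_j c_j\sigma(z_j)=\sum_j c_j(g_j^*-h_j^*)$; grouping the positive and negative $c_j$ gives $f=g-h$ with $g=\sum_{c_j>0}c_jg_j^*+\sum_{c_j<0}(-c_j)h_j^*$ and $h=\sum_{c_j>0}c_jh_j^*+\sum_{c_j<0}(-c_j)g_j^*$, each a $\Z_{\ge 0}$-combination of convex positively homogeneous CPWL functions, hence of the required type, and whose Newton polytopes $P_g,P_h$ are Minkowski sums (with multiplicities, i.e.\ dilations) of members of $\nnpoly_{k+1}'$, hence lie in $\nnpoly_{k+1}$ (taking the polytope $\{\0\}\in\nnpoly_{k+1}$ if a sum is empty).

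For the ``if'' direction it suffices to show that for every $P\in\nnpoly_k$ the support function $g_P(x)=\max_{v\in P}\trans v x$ (the convex positively homogeneous CPWL function with $P_{g_P}=P$) is computable by an integral $k$-hidden-layer network: given $f=g-h$, realize $g=g_{P_g}$ and $h=g_{P_h}$ by such networks, run them in parallel, and let the affine output layer form $g-h$. Again I induct on $k$: for $k=0$, $P=\{a\}$ is a lattice point and $g_P(x)=\trans a x$ is realized by a $0$-hidden-layer integral network. For $k\to k+1$, write $P=Q_1+\dots+Q_\ell$ with $Q_i=\conv(R_i\cup S_i)$, $R_i,S_i\in\nnpoly_k$, so that $g_P=\sum_{i=1}^{\ell}\max\{g_{R_i},g_{S_i}\}$; by the inductive hypothesis $g_{R_i}$ and $g_{S_i}$ have integral $k$-hidden-layer networks, and the identity $\max\{g_{R_i},g_{S_i}\}=\sigma(g_{S_i}-g_{R_i})+\sigma(g_{R_i})-\sigma(-g_{R_i})$ realizes each summand with one additional hidden layer (three ReLU neurons), after which parallel composition and summation in the output layer produces an integral $(k+1)$-hidden-layer network for $g_P$. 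Networks of smaller depth appearing in the parallel composition are padded to the common depth by transmitting values through a layer via $v=\sigma(v)-\sigma(-v)$.

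The main obstacle is not conceptual but the bookkeeping in the ``only if'' direction: one must verify carefully that each neuron output truly decomposes as a difference of convex \emph{positively homogeneous} CPWL functions with Newton polytopes in $\nnpoly_{k+1}'$, and that the sign-indefinite integer output layer recombines all of these into a single difference $g-h$ with $P_g,P_h\in\nnpoly_{k+1}$; this is where the inclusions $\nnpoly_k\subseteq\nnpoly_{k+1}'\subseteq\nnpoly_{k+1}$, closure under integer dilation, and the bias-free reduction are all used. Everything else (the base cases, the $\max$-gadget, and the padding/parallel-composition of networks) is routine.
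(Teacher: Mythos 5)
Your proposal is correct and follows essentially the same route as the paper's proof: induction on the depth $k$ via the tropical dictionary $P_{f+g}=P_f+P_g$, $P_{\max\{f,g\}}=\conv(P_f\cup P_g)$, decomposing each neuron output as $\sigma(g-h)=\max\{g,h\}-h$ for the forward direction and realizing maxima and sums with one extra integral hidden layer for the converse. The only difference is that you spell out explicitly (decomposition at each neuron, sign-splitting at the output layer, the three-neuron max gadget, depth padding) what the paper delegates to \citet[Lemma 6.2]{Zhang:Tropical} and \citet[Prop.~3.34, Prop.~2.2]{diss}, so the argument is sound and matches the paper's.
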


A proof of the same theorem for the non-integral case can be found in \citet[Thm. 3.35]{diss}. A careful inspection of the proof therein reveals that it carries over to the integral case. For the sake of completeness, we provide a proof in the appendix.

\section{Subdividing Minkowski Sums and Convex Hulls}\label{sec:subdi}

The purpose of this section is to develop the main geometric tool of our proof: subdividing Minkowski sums and unions into affine products and joins, respectively. More precisely, we show the following two statements.
They are valid for \emph{general} polytopes and not only for lattice polytopes.

\begin{restatable}{proposition}{propsubdisum}\label{prop:subdisum}
	For two polytopes $P$ and $Q$ in $\R^n$, there exists a subdivision of $P+Q$ such that each full-dimensional cell is an affine product of a face of $P$ and a face of $Q$.
\end{restatable}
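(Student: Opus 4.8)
The plan is to exhibit the subdivision as a regular subdivision coming from a generic lifting of the Cayley-type construction. Concretely, let $P = \conv\{p_1,\dots,p_a\}$ and $Q = \conv\{q_1,\dots,q_b\}$. The Minkowski sum $P+Q$ is the image, under the map $(x,y)\mapsto x+y$, of the Cartesian product $P\times Q \subset \R^{2n}$; equivalently, $P+Q = \{x+y : (x,y)\in P\times Q\}$. I would pick generic lifting heights $\omega\colon \{p_i\}\cup\{q_j\}\to\R$ and lift a point $p_i+q_j$ of the sum to height $\omega(p_i)+\omega(q_j)$ — i.e. consider the lifted polytope $\widetilde{P+Q}\subset\R^{n+1}$ obtained as the Minkowski sum of the lifted point configurations $\{(p_i,\omega(p_i))\}$ and $\{(q_j,\omega(q_j))\}$. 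The lower faces of this lifted sum project to a regular subdivision of $P+Q$ in the sense defined in the Preliminaries.

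The key point is then to identify the cells of this subdivision. A standard fact about Minkowski sums (the ``mixed subdivision'' picture, cf. the Cayley trick) is that every face of a Minkowski sum $A+B$ is itself a Minkowski sum $F_A + F_B$ of a face $F_A$ of $A$ and a face $F_B$ of $B$, where $F_A, F_B$ are exactly the faces on which the defining linear functional is minimized. Applying this to the lifted sum: each lower face of $\widetilde{P+Q}$ is $\widetilde{F}+\widetilde{G}$ for a lower face $\widetilde{F}$ of the lift of $P$ and a lower face $\widetilde{G}$ of the lift of $Q$ (with respect to the \emph{same} linear functional $(\trans c,1)$), hence its projection is $F+G$ with $F$ a face of $P$ and $G$ a face of $Q$. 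So every cell of the subdivision is a Minkowski sum of a face of $P$ and a face of $Q$; it remains to argue that for \emph{full-dimensional} cells this Minkowski sum is an \emph{affine product}, i.e. $\dim(F+G)=\dim F+\dim G$. This is where genericity of $\omega$ enters: if $\dim(F+G) < \dim F + \dim G$, then $\Lin(F)$ and $\Lin(G)$ share a nonzero direction, and one checks that the corresponding lower face $\widetilde F + \widetilde G$ of the lift cannot be full-dimensional for generic heights — a shared direction forces a linear relation among the lifting values, which a generic $\omega$ avoids. Equivalently, full-dimensional cells correspond to \emph{mixed cells} of the regular mixed subdivision, and for these $\dim F + \dim G = n$ with $\Lin(F)\cap\Lin(G)=\{0\}$, so $F+G$ is an affine product.

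I expect the main obstacle to be the bookkeeping in the genericity argument: making precise which ``generic'' conditions on $\omega$ are needed and verifying that they simultaneously (i) make the subdivision well-defined (a genuine polyhedral subdivision, no lower-dimensional overlaps) and (ii) force every full-dimensional cell to be an affine product rather than a lower-dimensional Minkowski sum padded out by some other face. One clean way to organize this: first establish that for an arbitrary lifting the projected lower faces do form a subdivision into Minkowski sums of face pairs (this is purely combinatorial, no genericity), then separately show that a sufficiently generic perturbation of the heights removes all cells $F+G$ with $\dim(F+G)<\dim F+\dim G$ from the full-dimensional part, since such a cell would require its lifted counterpart to have a lower face of dimension $\dim F+\dim G > \dim(F+G) = n$ mapping injectively-in-the-last-coordinate to an $n$-dimensional region, which is impossible once the heights are in general position. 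Combining the two gives a regular subdivision all of whose full-dimensional cells are affine products of faces of $P$ and $Q$, as claimed.
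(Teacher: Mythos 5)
Your overall strategy---lift, take lower faces, project, and use genericity to force transversality---is the same one the paper uses, but there is a genuine error in how you choose the lift, and it breaks exactly the conclusion the proposition needs. You take \emph{per-vertex} generic heights $\omega$ on the vertices of $P$ and $Q$. With such a lift, the lower faces of the lifted copy of $P$ do \emph{not} project to faces of $P$: they project to the cells of the regular subdivision (generically, a triangulation) of $P$ induced by $\omega$. For instance, if $P=[0,1]^2$ and $Q$ is a single point, generic heights on the four vertices of $P$ make the lower hull of the lifted $P$ consist of two triangles, so your construction subdivides $P+Q$ into two triangles, each of the form (cell of a triangulation of $P$) $+$ $Q$ --- and a triangle is not a face of the square. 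So the step ``hence its projection is $F+G$ with $F$ a face of $P$ and $G$ a face of $Q$'' is false for your lift; what you actually produce is a fine mixed subdivision whose cells are affine products of cells of induced \emph{subdivisions} of $P$ and $Q$. That weaker statement is not enough downstream: the parity argument (\Cref{lem:sum}, \Cref{prop:sep}) only knows that \emph{faces} of $P$ and $Q$ have even normalized volume, via membership in $\mathcal{Q}_k$, so cells of an auxiliary triangulation are useless there.

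The fix is to make the lift \emph{affine on each summand} and generic only in the relative tilt between the two summands, which is precisely what the paper does: lift $P$ to height $0$ and lift $Q$ by $q\mapsto(q,\trans{\tilt}q+\beta)$, with $\alpha,\beta$ chosen as in \Cref{lem:generic} so that $\Lin F_c\cap\Lin G_c=\{\0\}$ for all $c$. Because each lifted summand is then affinely isomorphic to the original polytope, every lower face of the lifted Minkowski sum is $F^{\0}_c+G^{\tilt,\beta}_c$ for honest faces $F_c$ of $P$ and $G_c$ of $Q$, and the genericity of $\alpha$ gives $\dim F_c+\dim G_c=n$ for full-dimensional cells, i.e.\ affine products of faces. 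Your remaining ingredients (faces of a Minkowski sum are sums of faces minimizing the same functional; transversality from genericity) are correct and are also what the paper uses; the only, but essential, repair is to replace ``generic heights on all vertices'' by ``generic affine lift of $Q$ relative to $P$.''
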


\begin{restatable}{proposition}{propsubdiconv}\label{prop:subdiconv}
	For two polytopes $P$ and $Q$ in $\R^n$, there exists a subdivision of $\conv\{P\cup Q\}$ such that each full-dimensional cell is a join of a face of $P$ and a face of $Q$.
\end{restatable}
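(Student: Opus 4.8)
The plan is to realise the subdivision as a \emph{regular} subdivision of $\conv\{P\cup Q\}$ coming from a lifting into $\R^{n+1}$, in the sense recalled above. After replacing $\R^n$ by the affine hull $\Aff(\conv\{P\cup Q\})$ we may assume $\dim\conv\{P\cup Q\}=n$. Fix two \emph{affine} functions $\ell_P,\ell_Q\colon\R^n\to\R$ (one may even take $\ell_P\equiv 0$), to be chosen generically, and lift $P$ and $Q$ to their graphs $\hat P=\SetOf{(p,\ell_P(p))}{p\in P}$ and $\hat Q=\SetOf{(q,\ell_Q(q))}{q\in Q}$ in $\R^{n+1}$. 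Set $\tilde R:=\conv\{\hat P\cup\hat Q\}$, which is $(n+1)$-dimensional for generic $\ell_Q$. Since $\proj_{[n]}(\tilde R)=\conv\{P\cup Q\}$, the projections of the lower faces of $\tilde R$ form a subdivision $\mathcal S$ of $\conv\{P\cup Q\}$; because a lower face cannot contain the direction $e_{n+1}$, projection preserves its dimension, so the full-dimensional cells of $\mathcal S$ are exactly the projections of the lower \emph{facets} of $\tilde R$.

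The next step is to read off the cells of $\mathcal S$. Every lower face of $\tilde R$ has the form $\conv\{\hat F_P\cup\hat F_Q\}$, where $\hat F_P$ and $\hat F_Q$ are the faces of $\hat P$ and $\hat Q$ minimised by the objective $(\trans c,1)$ exposing it, and at most one of them is empty (when the other strictly attains the smaller minimum). The reason for insisting that the lifts be affine is that then $(\trans c,1)$ restricts to an affine function on $\hat P$ and on $\hat Q$, so $\hat F_P$ is the graph of $\ell_P$ over a genuine face $F_P$ of $P$ — the face exposed by the direction obtained from $c$ and the linear part of $\ell_P$ — and likewise $\hat F_Q$ lies over a face $F_Q$ of $Q$. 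Projecting down, every cell of $\mathcal S$ has the form $\conv\{F_P\cup F_Q\}$ with $F_P$ a face of $P$ and $F_Q$ a face of $Q$, where the empty face is permitted; when one of the two faces is empty the cell is simply $F_P$ (resp.\ $F_Q$), which is the join of that face with the empty face of the other polytope.

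It remains to see that every full-dimensional cell is a genuine join. Let $\sigma=\conv\{F_P\cup F_Q\}$ be such a cell with $F_P,F_Q$ both nonempty, and let $\tilde\sigma=\conv\{\hat F_P\cup\hat F_Q\}$ be the corresponding lower facet of $\tilde R$, so $\dim\tilde\sigma=n$. Since affine lifts preserve dimensions of faces, $\dim\hat F_P=\dim F_P$ and $\dim\hat F_Q=\dim F_Q$, and in general $\dim\tilde\sigma\le\dim\hat F_P+\dim\hat F_Q+1$; together with $\dim\sigma=\dim\tilde\sigma=n$ this already yields $\dim F_P+\dim F_Q\ge n-1$. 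For the reverse inequality I would invoke genericity of the lift: for generic $\ell_Q$ the subdivision $\mathcal S$ is \emph{tight}, meaning that each lower facet $\conv\{\hat F_P\cup\hat F_Q\}$ actually has dimension $\dim\hat F_P+\dim\hat F_Q+1$. Granting this, $n=\dim\tilde\sigma=\dim F_P+\dim F_Q+1$, hence $\dim\conv\{F_P\cup F_Q\}=\dim F_P+\dim F_Q+1$, which is precisely the condition for $\sigma$ to be the join of $F_P$ and $F_Q$ (cf.\ \Cref{lemma:volume:join}).

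The main obstacle is exactly this tightness claim. I expect to prove it by a general-position argument: for each of the finitely many pairs of faces $(F_P,F_Q)$, the requirement that $\conv\{\hat F_P\cup\hat F_Q\}$ be a facet of $\tilde R$ of dimension strictly less than $\dim F_P+\dim F_Q+1$ forces a linear dependence among the lifted vertices in addition to the facet-defining equation, which is a proper polynomial condition on the coefficients of $\ell_Q$; exhibiting a single lift avoiding it shows it fails on a Zariski-dense set, and the intersection over all pairs still leaves a generic choice available. This is the exact analogue, for convex hulls, of the fact that a generic lifting turns a Minkowski sum into a mixed subdivision whose cells are affine products, which underlies \Cref{prop:subdisum}, and I would phrase the two arguments in parallel; the only additional care is with degenerate positions of $P$ and $Q$, which are handled by passing to affine hulls as above.
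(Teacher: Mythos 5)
Your construction is the one the paper uses: lift $P$ and $Q$ by affine functions (the paper takes $\ell_P\equiv 0$ and $\ell_Q(q)=\trans{\alpha}q+\beta$), take the lower faces of $\conv\{\hat P\cup\hat Q\}$ in $\R^{n+1}$, and project; the identification of the cells as $\conv\{F_P\cup F_Q\}$ for faces exposed by the tilted objectives is also correct. The gap is exactly where you place it, the tightness claim, and the general-position argument you sketch for it does not go through. You treat, pair by pair, the degeneracy ``$\dim\conv\{\hat F_P\cup\hat F_Q\}<\dim F_P+\dim F_Q+1$'' as a proper polynomial condition on the coefficients of $\ell_Q$, to be certified by exhibiting one good lift. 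But for a pair with $\dim(\Lin F_P\cap\Lin F_Q)\geq 2$ the degeneracy holds identically: writing $a_P,a_Q$ for the linear parts of the lifts, $\Lin\hat F_P\cap\Lin\hat F_Q$ contains every vector $(v,\trans{a_P}v)$ with $v\in\Lin F_P\cap\Lin F_Q\cap\ker(a_Q-a_P)$, and this intersection is nonzero for \emph{every} choice of $a_P,a_Q$ once the overlap has dimension at least two (e.g.\ $F_P=F_Q$ a $2$-face of $P=Q$). The usual ``generic heights'' argument for regular subdivisions perturbs each vertex height independently; here the lift is forced to stay affine on each polytope (otherwise the cells would not project to convex hulls of faces of $P$ and $Q$), so degeneracies within such a pair cannot be perturbed away, and no single lift witnesses properness of your condition.

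What must be shown instead is that such pairs are never matched by a lower face once the lift is generic, and the argument has to exploit the coupling of the two faces through the exposing direction rather than a dimension count on lifted vertices. If $\conv\{\hat F_P\cup\hat F_Q\}$ is a lower face exposed by $(\trans c,1)$, then $c\in(\Lin F_P)^\perp$ and $c+\alpha\in(\Lin F_Q)^\perp$, hence $\alpha\in(\Lin F_P)^\perp+(\Lin F_Q)^\perp=(\Lin F_P\cap\Lin F_Q)^\perp$, which is a \emph{proper} subspace precisely when the linear hulls overlap nontrivially; choosing $\alpha$ outside the finitely many such subspaces excludes all overlapping pairs, and then the affine hulls of a matched pair can meet in at most one point, which a suitable choice of the constant $\beta$ rules out. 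This is \Cref{lem:generic} in the paper; with that lemma your proof closes, but as written the step ``for generic $\ell_Q$ the subdivision is tight'' is unsupported, and the route you propose for it would fail.
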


The strategy to prove these statements is to lift the polytopes $P$ and $Q$ by one dimension to $\R^{n+1}$ in a \emph{generic} way, perform the respective operation (Minkowski sum or convex hull) in this space, and obtain the subdivision by projecting the \emph{lower faces} of the resulting polytope in $\R^{n+1}$ back to $\R^n$. 

More precisely, given an $\alpha\in\R^n$ and $\beta\in\R$, we define 
\begin{equation*}
	P^{\0} \coloneqq \SetOf{(p,0)}{p \in P} \quad \text{and} \quad Q^{\tilt,\beta} \coloneqq \SetOf{(q,\trans{\tilt}q+\beta)}{q \in Q} \enspace .
\end{equation*}
Note that the projections $\proj_{[n]}(P^{\0})$ and $\proj_{[n]}(Q^{\tilt,\beta})$ onto the first $n$ coordinates result in $P$ and $Q$, respectively. Moreover, we obtain subdivisions of $P+Q$ and $\conv\{P\cup Q\}$ by projecting down the lower faces of $P^{\0}+Q^{\tilt,\beta}$ and $\conv\{P^{\0}\cup Q^{\tilt,\beta}\}$, respectively, to the first $n$ coordinates. It remains to show that the parameters can be chosen so that the subdivisions have the desired properties.

To this end, we introduce the following notation for faces of the respective polytopes. For each $c \in \R^n \setminus \{0\}$, let
\begin{equation*}
	F^{\0}_c = \argmin\SetOf{(\trans{c},1)z}{z \in P^{\0}} \quad \text{and} \quad
	G^{\tilt,\beta}_c = \argmin\SetOf{(\trans{c},1)z}{z \in Q^{\tilt,\beta}} \enspace ,
\end{equation*}
as well as
\begin{equation*}
	F_c = \arg\min \{\trans c x\mid x\in P \} \quad \text{and} \quad
	G_c = \arg\min \{\trans{(c+\alpha)} x\mid x\in Q \} \enspace .
\end{equation*}
Observe that $F_c= \proj_{[n]}(F^{\0}_c)$ and $G_c= \proj_{[n]}(G^{\tilt,\beta}_c)$.

With this, we can finally specify what we mean by ``choosing $\alpha$ and $\beta$ \emph{generically}'': it means that the linear and affine hulls of $F_c$ and $G_c$ intersect as little as possible.

\begin{restatable}{lemma}{genericchoice}\label{lem:generic}
	One can choose $\alpha$ and $\beta$ such that $\Lin F_c\cap\Lin G_c=\{\0\}$ and $\Aff F^{\0}_c \cap \Aff G^{\tilt,\beta}_c = \emptyset$ for every $c\in\R^n\setminus\{0\}$.
\end{restatable}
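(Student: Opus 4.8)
The plan is to choose $\alpha$ and $\beta$ one genericity condition at a time, using the fact that the faces $F_c$ and $G_c$ only depend on $c$ through finitely many combinatorial types. First I would record the key observation that as $c$ ranges over $\R^n\setminus\{0\}$, the face $F_c$ of $P$ takes only finitely many values (the nonempty faces of $P$), and similarly $G_c = \arg\min\{\trans{(c+\alpha)}x\mid x\in Q\}$ takes only finitely many values among the faces of $Q$; moreover $\Aff F_c^{\0}$ is just $\Aff F_c \times \{0\}$, while $\Aff G_c^{\tilt,\beta}$ is the graph of the affine map $q\mapsto \trans{\tilt}q + \beta$ restricted to $\Aff G_c$. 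So there are really only finitely many pairs $(\Aff F_c, \Aff G_c)$ of affine subspaces of $\R^n$ that can occur, no matter what $\alpha$ is; call this finite list $\mathcal{L}$ (it does depend on $\alpha$, but only through which faces of $Q$ are selected, so enlarging to \emph{all} pairs (face of $P$, face of $Q$) makes it independent of the choice).

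Next I would handle the linear condition $\Lin F_c \cap \Lin G_c = \{\0\}$. Since $\Lin F_c$ and $\Lin G_c$ range over the finitely many lineality spaces of faces of $P$ and of $Q$ respectively, it suffices to arrange that for every pair $(L_P, L_Q)$ of such subspaces with $\dim L_P + \dim L_Q \le n$ that actually arises as $(\Lin F_c,\Lin G_c)$ for some $c$, we have $L_P \cap L_Q = \{\0\}$. The point is that the selection $c \mapsto G_c$ is governed by the normal fan of $Q$ shifted by $-\alpha$: $G_c$ is the face of $Q$ whose normal cone contains $c+\alpha$. For a generic $\alpha$, the pairing between the normal fan of $P$ (which selects $F_c$ via $c$) and the shifted normal fan of $Q$ (which selects $G_c$ via $c+\alpha$) is such that two faces get simultaneously selected only if their normal cones, one shifted by $\alpha$, have overlapping interiors in the appropriate dimension; a dimension count shows that for $\alpha$ outside a finite union of proper affine subspaces, whenever $F_c$ and $G_c$ are both selected their lineality spaces meet only at the origin. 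Concretely, $L_P \cap L_Q \ne \{\0\}$ forces $(c+\alpha)$ and $c$ to lie in normal cones that are "too aligned", and perturbing $\alpha$ breaks this; so we reduce to avoiding finitely many proper subspaces for $\alpha$, which leaves a full-measure (in particular nonempty) choice.

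Then I would treat the affine condition $\Aff F_c^{\0} \cap \Aff G_c^{\tilt,\beta} = \emptyset$. Fix $\alpha$ as above and a pair of faces $F = F_c$, $G = G_c$ with $\Lin F \cap \Lin G = \{\0\}$. The set $\Aff F^{\0}$ lives in the hyperplane $\{x_{n+1}=0\}$, whereas $\Aff G^{\tilt,\beta}$ is the graph over $\Aff G$ of $q \mapsto \trans\tilt q + \beta$; these two affine subspaces intersect iff there is a point $x \in \Aff F \cap \Aff G$ (projection to $\R^n$) with $\trans\tilt x + \beta = 0$. Now $\Aff F \cap \Aff G$ is either empty — in which case we are done for this pair regardless of $\beta$ — or it is an affine subspace of dimension at most $\dim(\Lin F \cap \Lin G) = 0$ shifted appropriately... actually its direction is $\Lin F \cap \Lin G = \{\0\}$, so $\Aff F \cap \Aff G$ is at most a single point $x^\ast$ (when nonempty). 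For that finite set of bad points $x^\ast$ (one per relevant pair of faces), the condition $\trans\tilt x^\ast + \beta = 0$ is a single linear equation in $\beta$; choosing $\beta$ to avoid these finitely many values makes all the affine hulls disjoint. Hence $\alpha$ in a full-measure set and then $\beta$ outside finitely many points works.

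The main obstacle I anticipate is making the reduction in the linear step fully rigorous: I need to argue cleanly that the \emph{simultaneously realizable} pairs $(\Lin F_c, \Lin G_c)$ are, for generic $\alpha$, exactly those that a dimension count permits to be transverse, and that the exceptional $\alpha$ form a finite union of proper subspaces. The cleanest route is probably to fix, for each pair of faces $(F,G)$ of $(P,Q)$, the relatively open normal cones $N_F$ (of $P$) and $N_G$ (of $Q$); the pair is simultaneously selected for some $c$ precisely when $N_F \cap (N_G - \alpha) \ne \emptyset$, and one shows that if $\dim \Lin F + \dim \Lin G \le n$ but $\Lin F \cap \Lin G \ne \{\0\}$ then $\dim N_F + \dim N_G > n$, so $N_F$ and $N_G - \alpha$ have overlapping interiors for \emph{every} $\alpha$ — which seems to be a problem. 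The resolution is that we do not need those pairs to be non-selected; rather we need that when $\dim F + \dim G > n$ they are anyway irrelevant to \emph{full-dimensional} cells of the subdivision (so we only ever use pairs with $\dim F + \dim G \le n$, hence $\dim\Lin F + \dim\Lin G \le n$, and then generic $\alpha$ makes the lineality spaces transverse since a fixed pair of subspaces of complementary-or-less total dimension becomes transverse after an arbitrarily small generic rotation encoded by $\alpha$, this being the statement that two subspaces in general position intersect trivially). I would phrase the final lemma so that the transversality is only claimed for those $c$ where both $F_c$ and $G_c$ have small enough dimension, or alternatively note that for the remaining $c$ the intersection condition can be met by a further generic perturbation as well; either way the upshot is finitely many proper-subspace and finitely many hyperplane conditions to avoid, and the conclusion follows.
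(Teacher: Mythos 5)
Your handling of the affine condition (fixing $\alpha$ first, then observing that $\Aff F^{\0}_c \cap \Aff G^{\tilt,\beta}_c \neq \emptyset$ forces $\trans{\tilt}x^\ast+\beta=0$ for the at most one point $x^\ast\in\Aff F_c\cap\Aff G_c$, and avoiding finitely many values of $\beta$) matches the paper's argument. The genuine gap is in the linear step, and you flag it yourself: your proposed resolutions are either to weaken the claim to those $c$ with $\dim F_c+\dim G_c\le n$, or to invoke an unspecified ``further generic perturbation''. Neither suffices. The lemma is applied in \Cref{prop:subdisum} and \Cref{prop:subdiconv} to an arbitrary full-dimensional cell, where one only knows $\dim(F_c+G_c)=n$ a priori; it is exactly the transversality $\Lin F_c\cap\Lin G_c=\{\0\}$ for \emph{every} $c$ that then yields $\dim F_c+\dim G_c\le n$, so you cannot restrict to that case while proving the lemma without undermining its use. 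Moreover, your intermediate claim that $\dim N_F+\dim N_G>n$ forces $N_F$ and $N_G-\alpha$ to overlap for every $\alpha$ is false: a dimension count says nothing about the position of two (generally non-full-dimensional) cones under translation, so the ``problem'' you try to work around is not actually the right obstruction.

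The missing idea, which is the heart of the paper's proof and makes all case distinctions on face dimensions unnecessary, is an orthogonality observation: if some $c\neq\0$ simultaneously selects $F_c$ and $G_c$, then $c\in(\Lin F_c)^\perp$ and $c+\tilt\in(\Lin G_c)^\perp$, hence $\tilt=(c+\tilt)-c\in(\Lin F_c)^\perp+(\Lin G_c)^\perp=(\Lin F_c\cap\Lin G_c)^\perp$. If $\Lin F_c\cap\Lin G_c\supsetneq\{\0\}$, this is a subspace of dimension at most $n-1$. Hence choosing $\alpha$ outside the finite union of the subspaces $(\Lin F\cap\Lin G)^\perp$, taken over all pairs of faces $F$ of $P$ and $G$ of $Q$ with $\Lin F\cap\Lin G\supsetneq\{\0\}$, guarantees that no such pair is ever simultaneously selected, for every $c$ and regardless of the dimensions involved. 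With this repair of the linear step, your $\beta$-step then goes through essentially as you wrote it.
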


Using the lemma about generic choices of $\alpha$ and $\beta$, we have the tool to prove the existence of the desired subdivisions.
The actual proofs are given in the appendix. 

\section{Using Normalized Volume to Prove Depth Lower Bounds}\label{sec:final}

In this section we complete our proof by applying a parity argument on the normalized volume of cells in the subdivisions constructed in the previous section.

Let $\mathcal{Q}_k$ be the set of lattice polytopes $P$ with the following property: for every face $F$ of $P$ with $\dim (F)\geq2^k$ we have that $\Vol(F)$ is even. Note that the class~$\mathcal{Q}_k$ contains polytopes of different dimensions and, in particular, all lattice polytopes of dimension smaller than $2^k$.

Our plan to prove \Cref{thm:maxnotcontained} is as follows.
We first show that the classes $\mathcal{Q}_k$ are closed under Minkowski addition and that taking unions of convex hulls of polytopes in $\mathcal{Q}_k$ always gives a polytope in $\mathcal{Q}_{k+1}$.
Using this, induction guarantees that $\mathcal{P}_k\subseteq\mathcal{Q}_k$.
We then show that adding the simplex $\Delta^{2 ^k}_0$ to a polytope in $\mathcal{Q}_k$ gives a polytope which is never contained in $\mathcal{Q}_k$.
Combining this separation result with $\mathcal{P}_k\subseteq\mathcal{Q}_k$ and \Cref{thm:newton} allows us to prove \Cref{thm:maxnotcontained}.
The general intuition behind most of the proofs in this section is to use the subdivisions constructed in the previous section and argue about the volume of each cell in the subdivision separately, using the lemmas of \Cref{sec:normalvol}.


\subsection{Closedness of $\evpoly$}


\begin{proposition}\label{prop:sum}
	For $P,Q\in \mathcal{Q}_k$, we have that $P+Q\in\mathcal{Q}_k$.
\end{proposition}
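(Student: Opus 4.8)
The plan is to use \Cref{prop:subdisum} to subdivide $P+Q$ into full-dimensional cells, each of which is an affine product of a face $F$ of $P$ and a face $G$ of $Q$, and then to control the normalized volume of every face of $P+Q$ via the normalized volumes of these cells. The key point is that normalized volume is \emph{additive} over a subdivision: if a polytope $R$ is subdivided into full-dimensional cells $C_1,\dots,C_m$, then $\Vol(R)=\sum_i \Vol(C_i)$ (all measured with respect to the lattice in the common affine hull $\Aff(R)$). So to show $\Vol(F')$ is even for every face $F'$ of $P+Q$ with $\dim(F')\geq 2^k$, it suffices to show that each cell of the induced subdivision of $F'$ has even normalized volume.

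The first step is to observe that a face $F'$ of $P+Q$ is itself a Minkowski sum $F'=F+G$ of a face $F$ of $P$ and a face $G$ of $Q$ (a standard fact: faces of a Minkowski sum are Minkowski sums of faces, selected by the same linear functional), and that the subdivision from \Cref{prop:subdisum} restricts to a subdivision of $F'$ into affine products of faces of $F$ (hence faces of $P$) and faces of $G$ (hence faces of $Q$). The second step is the dimension bookkeeping: pick any full-dimensional cell $C=F_0+G_0$ of the subdivision of $F'$, where $F_0$ is a face of $P$, $G_0$ a face of $Q$, and $\dim(C)=\dim(F_0)+\dim(G_0)=\dim(F')\geq 2^k$. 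Since $2^k = 2^{k-1}+2^{k-1}$, at least one of $\dim(F_0), \dim(G_0)$ is $\geq 2^{k-1}$. Hmm — wait, that bound is not quite what $\mathcal{Q}_k$ gives us; $\mathcal{Q}_k$ requires even volume for faces of dimension $\geq 2^k$, not $\geq 2^{k-1}$. Let me reconsider: here we are staying within the \emph{same} class $\mathcal{Q}_k$, so we need $\dim(F_0)\geq 2^k$ or $\dim(G_0)\geq 2^k$. That is not implied by $\dim(F_0)+\dim(G_0)\geq 2^k$ alone. The resolution is that we must also use the \emph{other} cells or argue more carefully — but actually the correct observation is simpler: since $\dim(F_0)+\dim(G_0)=\dim(F')\ge 2^k$, we cannot have both $\dim(F_0)<2^k$ and $\dim(G_0)<2^k$ unless... no, we certainly can. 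So the real mechanism must be: we only need ONE of $F_0,G_0$ to have even volume to conclude, via \Cref{lemma:volume:sum}, that ${\dim F_0 + \dim G_0 \choose \dim F_0}\cdot\Vol(F_0)\cdot\Vol(G_0)$ divides $\Vol(C)$; so $\Vol(C)$ is even as soon as $\Vol(F_0)$ is even, OR $\Vol(G_0)$ is even, OR the binomial coefficient is even.

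So the heart of the argument is the combinatorial claim: if $a+b\geq 2^k$ and both $\Vol(F_0)$, $\Vol(G_0)$ are odd (where $a=\dim F_0$, $b=\dim G_0$), then either $a\geq 2^k$ (so $\Vol(F_0)$ is even, contradiction) or $b\geq 2^k$ (likewise contradiction) or the binomial coefficient $\binom{a+b}{a}$ is even. By Kummer's theorem, $\binom{a+b}{a}$ is odd iff there are no carries when adding $a$ and $b$ in base $2$, i.e. iff the binary supports of $a$ and $b$ are disjoint; in that case $a+b$ has binary digits exactly the union, and if $a+b\ge 2^k$ then $a+b$ has a $1$ in some bit position $\ge k$, which belongs to $a$ or to $b$, forcing $a\ge 2^k$ or $b\ge 2^k$. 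This closes the chain: in all cases $\Vol(C)$ is even, hence $\Vol(F')=\sum_C \Vol(C)$ is even, hence $P+Q\in\mathcal{Q}_k$.

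The main obstacle I expect is the careful verification of two things that I have been treating as "standard": first, that normalized volume (with respect to the induced lattice in $\Aff(F')$) is genuinely additive over the regular subdivision of \Cref{prop:subdisum} — one must check the lower-dimensional overlap cells contribute zero and that all cells lie in $\Aff(F')$ with the same ambient lattice, which is where one uses that the subdivision comes from projecting lower faces of a lift; and second, that faces of $P+Q$ are exactly Minkowski sums of faces and that the constructed subdivision restricts compatibly to each such face. The Kummer/Lucas combinatorial step is clean once stated correctly, but stating it correctly (in particular realizing that $2^k=2^{k-1}+2^{k-1}$ is a red herring and that the carry argument is what does the work) is the conceptual crux.
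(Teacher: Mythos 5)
Your proposal is correct and takes essentially the same route as the paper: subdivide via \Cref{prop:subdisum}, apply \Cref{lemma:volume:sum} to each cell, and split into the cases ``one factor has dimension $\geq 2^k$'' versus ``both are $<2^k$, so the binomial coefficient is even'' (your Kummer argument is just the paper's appeal to Lucas' theorem). The only cosmetic difference is in handling faces: instead of restricting the global subdivision to a face $F'=P'+Q'$ of $P+Q$, the paper simply notes $\mathcal{Q}_k$ is closed under taking faces and applies the full-dimensional statement (\Cref{lem:sum}) afresh to $P'+Q'$, which sidesteps the restriction-compatibility check you flagged.
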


The proof of \Cref{prop:sum} is based on the following lemma.

\begin{lemma}\label{lem:sum}
	If $d\coloneqq\dim(P+Q)\geq 2^k$, then $\Vol(P+Q)$ is even.
\end{lemma}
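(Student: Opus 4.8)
The plan is to use the subdivision of $P+Q$ provided by \Cref{prop:subdisum}: there is a subdivision of $P+Q$ whose full-dimensional cells are affine products $F+G$ of a face $F$ of $P$ and a face $G$ of $Q$, with $\dim(F)+\dim(G)=d\geq 2^k$. Since normalized volume is additive over a subdivision into full-dimensional lattice cells, it suffices to show that each such cell $F+G$ has even normalized volume. Fix such a cell and set $i=\dim(F)$, $j=\dim(G)$, so $i+j=d\geq 2^k$. We have $P,Q\in\evpoly_k$, so $F$ is a face of $P$ of dimension $i$ and $G$ is a face of $Q$ of dimension $j$; whenever $i\geq 2^k$ we get $\Vol(F)$ even, and likewise for $G$ when $j\geq 2^k$.

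The key case distinction is on whether $i\geq 2^k$, $j\geq 2^k$, or both $i<2^k$ and $j<2^k$. In the first case, by \Cref{lemma:volume:sum} the quantity $\binom{i+j}{i}\Vol(F)\Vol(G)$ divides $\Vol(F+G)$; since $\Vol(F)$ is even, so is $\Vol(F+G)$. The second case is symmetric. The interesting case is $i<2^k$ and $j<2^k$ while $i+j\geq 2^k$. Here I would argue that the binomial coefficient $\binom{i+j}{i}$ itself is even: writing $i+j\geq 2^k$ with $0\le i,j<2^k$, the base-$2$ addition of $i$ and $j$ must produce a carry into the $2^k$ bit (or higher), so by Kummer's theorem the $2$-adic valuation of $\binom{i+j}{i}$ is at least one, i.e. $\binom{i+j}{i}$ is even. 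Then again \Cref{lemma:volume:sum} gives that $\binom{i+j}{i}\Vol(F)\Vol(G)$ divides $\Vol(F+G)$, and evenness of $\binom{i+j}{i}$ forces $\Vol(F+G)$ even.

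Combining the three cases, every full-dimensional cell of the subdivision has even normalized volume, hence $\Vol(P+Q)$, being the sum of these, is even. The main obstacle — really the only nonroutine point — is the combinatorial number-theoretic claim that $\binom{i+j}{i}$ is even whenever $0\le i,j<2^k$ and $i+j\ge 2^k$; this follows from Kummer's theorem on carries in base-$2$ addition (equivalently from Lucas' theorem, comparing the binary digits of $i$ and $i+j$), and it is exactly the place where the threshold $2^k$ enters the argument. One should also take a moment to confirm that the cells of the subdivision are themselves lattice polytopes (they are faces-sums of faces of lattice polytopes, hence lattice polytopes) so that their normalized volumes are integers and \Cref{lemma:volume:sum} applies, and that normalized volume is indeed additive along a regular subdivision of a full-dimensional lattice polytope.
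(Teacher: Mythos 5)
Your proposal is correct and follows essentially the same route as the paper: subdivide $P+Q$ via \Cref{prop:subdisum}, apply \Cref{lemma:volume:sum} to each full-dimensional cell, and handle the case $i,j<2^k\le i+j$ by the evenness of $\binom{i+j}{i}$ (the paper cites Lucas' theorem, you invoke the equivalent Kummer carry argument). The extra remarks on additivity of normalized volume and cells being lattice polytopes are fine but not a substantive difference.
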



\begin{proof}[Proof of \Cref{lem:sum}]
	By \Cref{prop:subdisum}, it follows that $P+Q$ can be subdivided such that each full-dimensional cell $C$ in the subdivision is an affine product of two faces $F$ and $G$ of $P$ and $Q$, respectively. Let $i\coloneqq\dim(F)$ and $j\coloneqq\dim(G)$, then $d=i+j$. By \Cref{lemma:volume:sum}, it follows that $\Vol(C)=z\cdot{d\choose i}\cdot\Vol(F)\cdot\Vol(G)$ for some $z\in\Z$.
	We argue now that this quantity is always even.
	If either $i$ or $j$ is at least $2^k$, then $\Vol(F)$ or $\Vol(G)$ is even, respectively, because $P$ and $Q$ are contained in $\mathcal{Q}_k$. Otherwise, $d=i+j\geq 2^k$, but $i,j<2^k$. In this case, $d \choose i$ is always even, which is a direct consequence of Lucas' theorem~\citep{lucas1878theorie}. In any case, for every cell $C$ in the subdivision, $\Vol(C)$ is even. Therefore, also the total volume of $P+Q$ is even.
\end{proof}

\begin{proof}[Proof of \Cref{prop:sum}]
	To prove the proposition, we need to ensure that not only $P+Q$, but every face of $P+Q$ with dimension at least $2^k$ has even normalized volume. If $F$ is such a face, then, by basic properties of the Minkowski sum, it is the Minkowski sum of two faces $P'$ and $Q'$ of $P$ and $Q$, respectively. Since $\mathcal{Q}_k$ is closed under taking faces, it follows that $P',Q'\in \mathcal{Q}_k$. Hence, applying \Cref{lem:sum} to $P'$ and $Q'$, it follows that $F$ has even volume. Doing so for all faces $F$ with dimension at least $2^k$ completes the proof.
\end{proof}

\begin{restatable}{proposition}{propconv}\label{prop:conv}
	For $P,Q\in\mathcal{Q}_k$, we have that $\conv(P\cup Q)\in\mathcal{Q}_{k+1}$.
\end{restatable}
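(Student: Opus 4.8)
The plan is to mimic the structure of the proof of Proposition \ref{prop:sum}: first isolate a lemma treating the full-dimensional case of $\conv(P\cup Q)$ itself, then bootstrap to all faces of large dimension. So the first step is to prove: \emph{if $d\coloneqq\dim(\conv(P\cup Q))\geq 2^{k+1}$, then $\Vol(\conv(P\cup Q))$ is even.} Here I would invoke \Cref{prop:subdiconv} to subdivide $\conv(P\cup Q)$ into full-dimensional cells $C$, each of which is a join of a face $F$ of $P$ and a face $G$ of $Q$, with $\dim(F)=i$, $\dim(G)=j$ and $i+j+1=d$. By \Cref{lemma:volume:join}, $\Vol(C)=z\cdot\Vol(F)\cdot\Vol(G)$ for some $z\in\Z$. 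Now $i+j = d-1 \geq 2^{k+1}-1$, so at least one of $i,j$ must be $\geq 2^k$ (since if both were $\leq 2^k - 1$ we'd have $i+j\leq 2^{k+1}-2$, a contradiction). Whichever of $F,G$ has dimension $\geq 2^k$ has even normalized volume because $P,Q\in\mathcal Q_k$; hence $\Vol(C)$ is even for every cell, and summing over the subdivision, $\Vol(\conv(P\cup Q))$ is even.

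The second step promotes this to the statement $\conv(P\cup Q)\in\mathcal Q_{k+1}$, i.e. every face $F$ of $R\coloneqq\conv(P\cup Q)$ with $\dim(F)\geq 2^{k+1}$ has even normalized volume. The key structural fact is that any face $F$ of $\conv(P\cup Q)$ is itself of the form $\conv(P'\cup Q')$ where $P'$ is a face of $P$ and $Q'$ is a face of $Q$ (one of them possibly empty): a face is $\arg\min$ of a linear functional $\trans c x$ over $R$, and this minimum is attained on $\arg\min_{x\in P}\trans c x \,\cup\, \arg\min_{x\in Q}\trans c x$ together with the subset of each whose value equals the global minimum — so $F=\conv(P'\cup Q')$ with $P'\subseteq P$, $Q'\subseteq Q$ faces (and if only one of the two minima is globally minimal, $F$ is just a face of $P$ or of $Q$, which already lies in $\mathcal Q_k\subseteq\mathcal Q_{k+1}$ by closedness of $\mathcal Q_k$ under faces and the trivial inclusion). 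Since $\mathcal Q_k$ is closed under taking faces, $P',Q'\in\mathcal Q_k$, and since $\dim(F)\geq 2^{k+1}$ we may apply Step 1 (the lemma) to $P'$ and $Q'$ to conclude $\Vol(F)$ is even. Doing this for every such face $F$ gives $R\in\mathcal Q_{k+1}$.

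I expect the main obstacle to be the bookkeeping in the second step rather than any deep difficulty: one must handle the degenerate cases cleanly — when $P'$ or $Q'$ is empty (so $F$ is a face of the other polytope), and when $\dim(P'+ \text{affine span issues})$ make the join-dimension formula $\dim(\conv(P'\cup Q'))=\dim(P')+\dim(Q')+1$ \emph{fail} for the chosen subdivision, which is precisely why \Cref{prop:subdiconv} (rather than a naive single join) is needed; the subdivision guarantees each cell genuinely is a join in the sense of \Cref{lemma:volume:join}. One should also double-check the inclusion $\mathcal Q_k\subseteq\mathcal Q_{k+1}$: a face of dimension $\geq 2^{k+1}\geq 2^k$ having even volume is a weaker requirement than dimension $\geq 2^k$ having even volume, so indeed $\mathcal Q_k\subseteq\mathcal Q_{k+1}$, which is what lets the "only one min is global" case go through immediately. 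No step requires any computation beyond the combinatorial identity already isolated (and here we do not even need Lucas' theorem, only a pigeonhole on $i+j\geq 2^{k+1}-1$).
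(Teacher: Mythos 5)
Your proposal is correct and follows essentially the same route as the paper: it isolates the full-dimensional case as a lemma (subdividing $\conv(P\cup Q)$ via \Cref{prop:subdiconv} into joins and using \Cref{lemma:volume:join} plus the pigeonhole $i+j\geq 2^{k+1}-1$), and then handles arbitrary faces by writing each face of $\conv(P\cup Q)$ as $\conv(P'\cup Q')$ for faces $P'$ of $P$ and $Q'$ of $Q$ (possibly empty) and invoking closedness of $\mathcal{Q}_k$ under taking faces. Your description of faces via minimizing a linear functional is just a rephrasing of the paper's supporting-hyperplane argument, so there is no substantive difference.
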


Again, the heart of the proof lies in the following lemma.

\begin{restatable}{lemma}{lemconv}\label{lem:conv}
	If $d\coloneqq\dim(\conv(P\cup Q))\geq 2^{k+1}$, then $\Vol(\conv(P\cup Q))$ is even.
\end{restatable}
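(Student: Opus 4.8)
The plan is to mirror the proof of \Cref{lem:sum}, but using the join subdivision (\Cref{prop:subdiconv}) and the join volume lemma (\Cref{lemma:volume:join}) in place of their Minkowski-sum counterparts. First I would invoke \Cref{prop:subdiconv} to obtain a subdivision of $\conv(P\cup Q)$ in which every full-dimensional cell $C$ is a join of a face $F$ of $P$ and a face $G$ of $Q$. Writing $i\coloneqq\dim(F)$ and $j\coloneqq\dim(G)$, the join condition gives $d=\dim(C)=i+j+1$. By \Cref{lemma:volume:join}, $\Vol(C)=z\cdot\Vol(F)\cdot\Vol(G)$ for some positive integer $z$, so it suffices to show that $\Vol(F)\cdot\Vol(G)$ is even for every such cell; summing over all full-dimensional cells of the subdivision then yields that $\Vol(\conv(P\cup Q))$ is even.

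It remains to check parity of $\Vol(F)\cdot\Vol(G)$ cell by cell. Since $P,Q\in\mathcal{Q}_k$, if $i\geq 2^k$ then $\Vol(F)$ is even, and if $j\geq 2^k$ then $\Vol(G)$ is even; in either case we are done. The only remaining case is $i<2^k$ and $j<2^k$, but then $i+j\leq 2(2^k-1)=2^{k+1}-2$, so $d=i+j+1\leq 2^{k+1}-1<2^{k+1}$, contradicting the hypothesis $d\geq 2^{k+1}$. Hence this case cannot occur, and every cell contributes even volume. Note that, in contrast to \Cref{lem:sum}, no appeal to Lucas' theorem is needed here: the ``$+1$'' coming from the join dimension, together with the doubled threshold $2^{k+1}$, already forces at least one of $F,G$ to have dimension $\geq 2^k$.

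I do not anticipate a serious obstacle; the argument is a direct adaptation of \Cref{lem:sum}. The one point that deserves a moment of care is the bookkeeping on dimensions — making sure the subdivision cells are genuinely \emph{full-dimensional} in $\conv(P\cup Q)$ so that $\dim(C)=d$, and that the lower-dimensional cells of the subdivision (which sit in the boundary and have measure zero) do not affect the total volume. Granting \Cref{prop:subdiconv} and \Cref{lemma:volume:join}, both of which are stated above, the proof of \Cref{prop:conv} then follows exactly as \Cref{prop:sum} follows from \Cref{lem:sum}: an arbitrary face of $\conv(P\cup Q)$ of dimension at least $2^{k+1}$ is itself the convex hull of the union of a face of $P$ and a face of $Q$ (both still in $\mathcal{Q}_k$ by closedness under faces), so \Cref{lem:conv} applies to it, giving even volume and hence membership in $\mathcal{Q}_{k+1}$.
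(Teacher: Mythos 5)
Your argument is correct and follows essentially the same route as the paper's own proof: subdivide via \Cref{prop:subdiconv}, apply \Cref{lemma:volume:join} to each full-dimensional cell, and note that $d=i+j+1\geq 2^{k+1}$ forces $i\geq 2^k$ or $j\geq 2^k$, so each cell has even normalized volume. Your explicit remark that no Lucas-type binomial argument is needed here matches the paper, which likewise uses only the dimension count in the join case.
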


We defer the proofs of \Cref{lem:conv} and \Cref{prop:conv} and to the Appendix as they are analogous to those of \Cref{prop:sum} and \Cref{lem:sum} building on the respective claims for convex hulls (\Cref{lemma:volume:join} and \Cref{prop:subdiconv}) instead of Minkowski sums. 


\begin{theorem}\label{Thm:even}
	For all $k\in\N$ it is true that $\mathcal{P}_k\subseteq \mathcal{Q}_k$.
\end{theorem}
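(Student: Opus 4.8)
The plan is to prove $\mathcal{P}_k\subseteq\mathcal{Q}_k$ by induction on $k$, closely following the ``recipe'' laid out in the paragraph introducing the classes $\mathcal{Q}_k$, using \Cref{prop:sum} and \Cref{prop:conv} as the two inductive engines.

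\begin{proof}[Proof of \Cref{Thm:even}]
We proceed by induction on $k$. For the base case $k=0$, recall that $\mathcal{P}_0$ is the set of lattice points, i.e.\ $0$-dimensional lattice polytopes. Such a polytope $P=\{v\}$ has only one nonempty face, namely $P$ itself, of dimension $0$. Since $2^0=1>0$, there is no face of dimension at least $2^0$, so the defining condition of $\mathcal{Q}_0$ is vacuously satisfied and $P\in\mathcal{Q}_0$. Hence $\mathcal{P}_0\subseteq\mathcal{Q}_0$.

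For the inductive step, assume $\mathcal{P}_k\subseteq\mathcal{Q}_k$; we show $\mathcal{P}_{k+1}\subseteq\mathcal{Q}_{k+1}$. Unwinding the definition in \eqref{eq:defpk}, an arbitrary element of $\mathcal{P}_{k+1}$ has the form $R=R_1+\dots+R_\ell$ where each $R_m=\conv(Q_1^{(m)}\cup Q_2^{(m)})$ with $Q_1^{(m)},Q_2^{(m)}\in\mathcal{P}_k$. By the inductive hypothesis, $Q_1^{(m)},Q_2^{(m)}\in\mathcal{Q}_k$, so \Cref{prop:conv} yields $R_m\in\mathcal{Q}_{k+1}$ for each $m\in[\ell]$. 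Now we repeatedly apply \Cref{prop:sum} (with $k$ replaced by $k+1$): since $\mathcal{Q}_{k+1}$ is closed under Minkowski addition, an induction on $\ell$ gives $R=R_1+\dots+R_\ell\in\mathcal{Q}_{k+1}$. This completes the induction, so $\mathcal{P}_k\subseteq\mathcal{Q}_k$ for all $k\in\N$.
\end{proof}

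The proof is essentially bookkeeping: the genuine mathematical content sits entirely in \Cref{prop:sum} and \Cref{prop:conv} (and the volume divisibility lemmas and subdivision propositions feeding into them), which are assumed here. The only mild subtlety is that the two propositions shift the index asymmetrically --- $\mathcal{Q}_k+\mathcal{Q}_k\subseteq\mathcal{Q}_k$ but $\conv(\mathcal{Q}_k\cup\mathcal{Q}_k)\subseteq\mathcal{Q}_{k+1}$ --- so one must be careful to apply the convex-hull step first (to get from level $k$ to level $k+1$) and only then the Minkowski-sum step at level $k+1$, matching the order in which the two operations are nested in the definition of $\mathcal{P}_{k+1}$. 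No part of this is a real obstacle; the work has been front-loaded into the earlier sections.
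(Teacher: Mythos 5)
Your proof is correct and follows exactly the paper's route: induction on $k$, a vacuous base case for the point class $\mathcal{P}_0$, and an inductive step that unwinds \eqref{eq:defpk} and applies \Cref{prop:conv} to pass from level $k$ to $k+1$ before iterating \Cref{prop:sum} at level $k+1$. The paper compresses the inductive step into one sentence; your write-up merely spells out the same bookkeeping, including the correct order of the two closure operations.
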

\begin{proof}
	We prove this statement by induction on $k$. The class $\mathcal{P}_0$ contains only points, so no polytope in $\mathcal{P}_0$ has a face of dimension at least $2^0=1$. Therefore, it trivially follows that $\mathcal{P}_0\subseteq\mathcal{Q}_0$, settling the induction start. The induction step follows by applying \Cref{prop:sum} and \Cref{prop:conv} to the definition \eqref{eq:defpk} of the classes $\mathcal{P}_k$.
\end{proof}

\subsection{The odd one out}

The final ingredient for \Cref{thm:maxnotcontained} is to show how $\Delta^{n}_0$ breaks the structure of~$\mathcal{Q}_k$. 

\begin{proposition}\label{prop:sep}
	If $P\in \mathcal{Q}_k$ is a polytope in $\R^{n}$ with $n=2^k$, then $P+\Delta^{n}_0\notin\mathcal{Q}_k$.
\end{proposition}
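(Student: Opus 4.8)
The plan is to prove the stronger statement that $\Vol(P+\Delta^n_0)$ is \emph{odd}. This suffices: since $\Delta^n_0$ is full-dimensional, $P+\Delta^n_0$ is a polytope of dimension $n=2^k$ in $\R^n$, hence a face of itself of dimension $\geq 2^k$, and an odd normalized volume then witnesses $P+\Delta^n_0\notin\mathcal{Q}_k$. The case $\dim P=0$ is immediate, because then $P+\Delta^n_0$ is a lattice translate of $\Delta^n_0$ and $\Vol(\Delta^n_0)=1$; so I would assume $\dim P\geq 1$ from here on.

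To get at the parity of $\Vol(P+\Delta^n_0)$, I would take the subdivision provided by \Cref{prop:subdisum} and sum the normalized volumes of its full-dimensional cells. Each such cell $C$ is an affine product of a face $F$ of $P$ and a face $G$ of $\Delta^n_0$ with $\dim F+\dim G=n$, so \Cref{lemma:volume:sum} gives $\Vol(C)=z\cdot\binom{n}{\dim F}\cdot\Vol(F)\cdot\Vol(G)$ for some positive integer $z$. All faces of $\Delta^n_0$ are unimodular simplices and hence have $\Vol(G)=1$. Now I would distinguish three cases according to $\dim F$. If $1\leq\dim F\leq n-1$, then $\binom{n}{\dim F}=\binom{2^k}{\dim F}$ is even (Lucas' theorem, as in the proof of \Cref{lem:sum}), so $\Vol(C)$ is even. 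If $\dim F=n$, then $F=P$ and $C=P+w$ is a lattice translate of $P$ for some vertex $w$ of $\Delta^n_0$, so $\Vol(C)=\Vol(P)$; this is even because $P\in\mathcal{Q}_k$ and $\dim P=n=2^k$. If $\dim F=0$, then $G=\Delta^n_0$ and $C=v+\Delta^n_0$ is a lattice translate of $\Delta^n_0$, so $\Vol(C)=1$.

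Summing up, $\Vol(P+\Delta^n_0)$ has the same parity as the number of full-dimensional cells of the subdivision that are translates of $\Delta^n_0$, and it remains to show this number is odd. For this I would not use \Cref{prop:subdisum} as a black box but instead work with the concrete subdivision of \Cref{sec:subdi}: the one obtained by projecting the lower faces of $P^{\0}+Q^{\alpha,\beta}$ down to $\R^n$, for $Q=\Delta^n_0$ and $\alpha,\beta$ chosen as in \Cref{lem:generic}. A full-dimensional cell of this subdivision is the projection of a lower facet, which is $F_c+G_c$ for a unique $c\in\R^n$, and such a cell is a translate of $\Delta^n_0$ precisely when $G_c=\Delta^n_0$, which forces $(c+\alpha)^\top$ to be constant on the full-dimensional polytope $\Delta^n_0$, i.e.\ $c=-\alpha$. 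Conversely, at $c=-\alpha$ one has $G_{-\alpha}=\Delta^n_0$, and applying \Cref{lem:generic} at $c=-\alpha$ gives $\Lin F_{-\alpha}\cap\Lin\Delta^n_0=\Lin F_{-\alpha}\cap\R^n=\Lin F_{-\alpha}=\{\0\}$, so $F_{-\alpha}$ is a single vertex and $F_{-\alpha}+\Delta^n_0$ is genuinely an $n$-dimensional cell. Hence there is exactly one such cell, $\Vol(P+\Delta^n_0)$ is odd, and the proof is complete.

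The main obstacle is this last counting step: \Cref{prop:subdisum} by itself does not control how many cells are translates of $\Delta^n_0$, so one has to return to the explicit generic lifting and exploit both that each full-dimensional cell is carved out by a unique weight vector $c$ and the genericity guarantee of \Cref{lem:generic} at that particular $c=-\alpha$. A second point worth stating carefully is the role of the hypothesis $P\in\mathcal{Q}_k$: it enters only in the case $\dim F=n$, ensuring the lattice translate of $P$ occurring as a cell has even volume, and it is exactly this case that makes the statement fail without the hypothesis — for instance $\Delta^2_0+\Delta^2_0=2\Delta^2_0$ has normalized volume $4$, while $\Delta^2_0\notin\mathcal{Q}_1$.
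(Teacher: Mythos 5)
Your proof is correct and takes essentially the same route as the paper: subdivide $P+\Delta^n_0$ via \Cref{prop:subdisum}, show every full-dimensional cell has even normalized volume except those with $\dim F=0$ and $G=\Delta^n_0$ (Lucas' theorem for the intermediate dimensions, the hypothesis $P\in\mathcal{Q}_k$ for the case $\dim F=n$), and count exactly one such exceptional cell, which is a lattice translate of $\Delta^n_0$ of volume $1$. The only difference is that you spell out the counting step explicitly --- the unique odd cell is the one induced by the direction $c=-\alpha$, where \Cref{lem:generic} (applicable since a generic $\alpha$ is nonzero) forces $F_{-\alpha}$ to be a single vertex --- whereas the paper dispatches this by ``revisiting the proof of \Cref{prop:subdisum}.''
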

\begin{proof}
	Applying \Cref{prop:subdisum} to $P$ and $Q=\Delta^{n}_0$, we obtain that $P+\Delta^{n}_0$ can be subdivided such that each full-dimensional cell $C$ is an affine product of a face $F$ of $P$ and a face $G$ of $\Delta^{n}_0$.
	
	As in the proof of \Cref{lem:sum}, it follows that all these cells have even volume, with one exception: if $\dim F=0$ and $\dim G = 2^k$. Revisiting the proof of \Cref{prop:subdisum} shows that there exists exactly one such cell $C$ in the subdivision. This cell is a translated version of $\Delta^{n}_0$, so it has volume $\Vol(C)=1$.
	
	Since all cells in the subdivision have even volume except for one cell with odd volume, we obtain that $\Vol(P+\Delta^{n}_0)$ is odd. Hence, $P+\Delta^{n}_0$ cannot be contained in $\mathcal{Q}_k$.
\end{proof}


\maxnotcontained*

\begin{proof}
	Suppose for the sake of a contradiction that there is a neural network with integer weights and~$k$ hidden layers computing $f(x)=\max\{0,x_1,\dots,x_{n}\}$.
	Since the Newton polytope of $f$ is $P_f=\Delta^n_0$, \Cref{thm:newton} yields the existence of two polytopes $P,Q\in \mathcal{P}_k$ in $\R^n$ with $P+\Delta^n_0=Q$.
	By \Cref{Thm:even} we obtain $P,Q\in \mathcal{Q}_k$.
	This is a contradiction to \Cref{prop:sep}.
\end{proof}

From this we conclude that the set of functions representable with integral ReLU neural networks strictly increases when adding more layers.

\separation*

\begin{proof}
	Note that $k\leq\lceil\log_2 (n+1)\rceil$ implies $2^{k-1}\leq n$.
	Let $f\colon\R^n\to\R$ be the function $f(x)=\max\{0,x_1,\dots,x_{2^{k-1}}\}$.
	By \Cref{thm:maxnotcontained} we get $f\notin\ReLU_n^\Z(k-1)$. In contrast, it is not difficult to see that $k-1$ hidden layers with integer weights are sufficient to compute $\max\{x_1,\dots,x_{2^{k-1}}\}$, compare for example \citet{Zhang:Tropical,diss}. Appending a single ReLU neuron to the output of this network implies $f\in\ReLU_n^\Z(k)$.
\end{proof}


\subsubsection*{Acknowledgments}
Christian Haase has been supported by the German Science Foundation (DFG)
under grant HA~4383/8-1.

Christoph Hertrich is supported by the European Research Council (ERC) under the European Union’s Horizon 2020 research and innovation programme (grant agreement ScaleOpt–757481).

\bibliography{diss}
\bibliographystyle{iclr2023_conference}

\appendix
\section{Additional Proofs}

\volumesum*
\begin{proof}
	The assumption $\dim(P+Q) = \dim(P) + \dim(Q)$ is equivalent to $\Lin(P) \cap \Lin(Q) = \{0\}$ so that there is an affine bijection $f \colon \Aff(P) \times \Aff(Q) \to \Aff(P + Q)$ mapping lattice points to lattice points. Thus, the volume of $P+Q$ equals the volume of $P \times Q$ times the (integral) determinant of $f$ (cf.~Fig.~\ref{fig:productVolume}).
	
	Multiplicativity of Lebesgue measures together with our normalization yields
	$$
	\frac{\Vol(P \times Q)}{(i+j)!} = \frac{\Vol(P)}{i!} \cdot \frac{\Vol(Q)}{j!} \,.
	$$
\end{proof}

\volumejoin*
\begin{proof}
	This is essentially the same proof, just replace the product $P \times Q$ by the join
	$$
	P \join Q := \conv\left( P \times \{0\} \times \{0\} \ \cup \ \{0\} \times Q \times \{1\} \right) \ \subset \ \R^n \times \R^n \times \R \,.
	$$
        (Cf.~Fig.~\ref{fig:joinVolume}.)
\end{proof}

\thmnewton*
\begin{proof}
	We use induction on $k$. The statement is clear for $k=0$ because both properties apply precisely to linear functions with integral coefficients, settling the induction start.
	
	For the induction step, assume that the equivalence is true up to $k-1$ hidden layers for some $k\geq1$. We show that it is true for $k$ hidden layers, too.
	
	First, focus on the forward direction, that is, suppose $f$ can be represented by an integral $k$-hidden-layer neural network. Using \citet[Lemma 6.2]{Zhang:Tropical}, $f$ is the difference (tropical quotient) of two tropical polynomials $g$ and $h$. Moreover, by the same lemma, $P_g$ and $P_h$ arise as weighted Minkowski sums of convex hulls of pairs of Newton polytopes associated with $(k-1)$-hidden-layer neural networks. By the induction hypothesis, these polytopes are contained in $\mathcal{P}_{k-1}$. By \eqref{eq:defpk}, this implies that $P_g$ and $P_h$ are contained in $\mathcal{P}_k$, where the fact that the weights of the neural network are integers ensures that the resulting polytopes are lattice polytopes. This completes the first direction.
	
	For the converse, suppose that $f$ can be written as $g-h$ with $P_g,P_h\in\mathcal{P}_k$. Using \citet[Prop.\ 3.34]{diss} and \eqref{eq:defpk}, this implies that $g$ and $h$ can be written as sums of maxima of pairs of convex CPWL functions with Newton polytopes in $\mathcal{P}_{k-1}$. By induction, these functions can be expressed with $(k-1)$-hidden-layer neural networks. The additional maxima and sum operations to obtain $g$ and $h$ can be realized with a single additional hidden layer (compare \citet[Prop.\ 2.2]{diss}). The fact that $\mathcal{P}_k$ contains only lattice polytopes ensures that the resulting neural network has only integral weights. Since $g$ and $h$ can be expressed with $k$ hidden layers, the same holds true for $f$.
\end{proof}

\genericchoice* 

\begin{proof}
	We will show that the set of parameters $\alpha$ and $\beta$ for which these conditions are not satisfied is a measure-zero set.
	
	Let us first focus on the first statement. Suppose there is a direction $c\in\R^n\setminus\{0\}$ such that $\Lin F_c\cap\Lin G_c\supsetneq\{\0\}$ is a subspace of dimension at least one. By the definitions of $F_c$ and $G_c$ we obtain \mbox{$c\in(\Lin F_c)^\perp$} as well as \mbox{$c+\tilt\in (\Lin G_c)^\perp$}.
        This implies $\tilt\in (\Lin F_c)^\perp + (\Lin G_c)^\perp = (\Lin F_c\cap\Lin G_c)^\perp$, which is a subspace of dimension at most $n-1$, and thus a measure-zero set.
        Hence, choosing $\alpha$ such that it is not contained in $(\Lin F\cap\Lin G)^\perp$ for all pairs of faces $F$ and $G$ with $\Lin F\cap\Lin G\supsetneq\{\0\}$ guarantees that the first condition holds.
	
	For the second statement, we choose $\alpha$ satisfying the first statement and observe that for all $c\in\R^n\setminus\{0\}$ we have $\Lin F_c\cap\Lin G_c=\{\0\}$.
        This implies $\abs{\Aff F_c \cap \Aff G_c} \leq 1$ and hence $\abs{\Aff F^{\0}_c \cap \Aff G^{\tilt,\beta}_c}\leq 1$. Suppose there is a direction $c\in\R^n\setminus\{0\}$ for which $\Aff F^{\0}_c \cap \Aff G^{\tilt,\beta}_c = \{(x,y)\} \subseteq \R^n \times \R$.
        By the choice of our lifting function from $\R^n$ to $\R^{n+1}$, it follows that $0=y=\trans\alpha x + \beta$. Hence, choosing $\beta$ such that $0\neq \alpha x + \beta$ for all pairs of lower faces $F^{\0}$ of $P^{\0}$ and $G^{\tilt,\beta}$ of $Q^{\tilt,\beta}$ with $\Aff F^{\0} \cap \Aff G^{\tilt,\beta} = \{(x,y)\}$ guarantees that the second condition holds.
\end{proof}

\propsubdisum*
\begin{proof}
	Without loss of generality, we can assume that $P+Q$ is full-dimensional.
	Choose $\alpha$ and $\beta$ according to \Cref{lem:generic} and consider the subdivision induced by projecting down the lower faces of $P^{\0}+Q^{\tilt,\beta}$. Let $C$ be a cell of dimension $n$ in the subdivision originating as a projection of a lower face $\tilde{C}$ of $P^{\0}+Q^{\tilt,\beta}$, minimizing a direction $(\trans c,1)\in\R^n\times\R$. It follows from basic properties of the Minkowski sum that $\tilde{C}=F^{\0}_c + G^{\alpha,\beta}_c$ and thus $C=F_c + G_c$. By \Cref{lem:generic} it follows that $n\geq \dim F_c + \dim G_c \geq \dim C = n$. Hence, $C$ is an affine product of $F_c$ and $G_c$.
\end{proof}

\propsubdiconv*
\begin{proof}
	Without loss of generality, we can assume that $\conv\{P\cup Q\}$ is full-dimensional.
	Choose $\alpha$ and $\beta$ according to \Cref{lem:generic} and consider the subdivision induced by projecting down the lower faces of $\conv\{P^{\0}\cup Q^{\tilt,\beta}\}$. Let $C$ be a cell of dimension $n$ in the subdivision originating as a projection of a lower face $\tilde{C}$ of $\conv\{P^{\0}\cup Q^{\tilt,\beta}\}$, minimizing a direction $(\trans c,1)\in\R^n\times\R$. If $\tilde{C}$ contains only vertices from either $P^{\0}$ or $Q^{\tilt,\beta}$, then $C$ is a face of either $P$ or $Q$ and we are done. Otherwise, it follows that $\tilde{C}=\conv\{F^{\0}_c\cup G^{\tilt,\beta}_c\}$ and thus $C=\conv\{F_c\cup G_c\}$. By \Cref{lem:generic} it follows that $n=\dim C=\dim \tilde{C} = \dim F^{\0}_c + \dim G^{\tilt,\beta}_c + 1 = \dim F_c + \dim G_c + 1$. Hence, $C$ is a join of $F_c$ and $G_c$.
\end{proof}

\propconv*

\lemconv*
\begin{proof}[Proof of \Cref{lem:conv}]
	By \Cref{prop:subdiconv} it follows that $\conv\{P\cup Q\}$ can be subdivided such that each full-dimensional cell $C$ is a join of a face $F$ of $P$ and a face $G$ of $Q$.
	Let $i\coloneqq\dim(F)$ and $j\coloneqq\dim(G)$, then $d=i+j+1$. By \Cref{lemma:volume:join}, we get that $\Vol(C)=z\cdot\Vol(F)\cdot\Vol(G)$ for some $z\in\Z$.
	Since $d$ is at least $2^{k+1}$, either $i$ or $j$ is at least $2^k$. Therefore, either $\Vol(F)$ or $\Vol(G)$ must be even because $P$ and $Q$ are contained in $\mathcal{Q}_k$.
	Therefore also $\Vol(C)$ is even. Doing so for all cells in the subdivision implies that $\Vol(\conv(P\cup Q))$ is even.
\end{proof}

\begin{proof}[Proof of \Cref{prop:conv}]
	Again, to prove the proposition, we need to ensure that not only the convex hull $\conv(P\cup Q)$ has even normalized volume, but each of its faces of dimension at least $2^{k+1}$. If $F$ is such a face, let $H$ be a hyperplane containing $F$. Observe that $F$ is the convex hull of all vertices of either $P$ or $Q$ that lie within $H$. In particular, $F=\conv(P'\cup Q')$, where $P'$ and $Q'$ are the (possibly empty) faces of $P$ and $Q$ defined via the hyperplane $H$, respectively. Since $\mathcal{Q}_k$ is closed under taking faces, it follows that $P',Q'\in \mathcal{Q}_k$.
	Hence, applying \Cref{lem:conv} to $P'$ and $Q'$ yields that $F$ has even volume. Doing so for all faces $F$ with dimension at least $2^k$ completes the proof.
\end{proof}

\end{document}